\newif\ifFullVersion
\definecolor{NewColor}{rgb}{0,0,0}
\definecolor{NewColor2}{rgb}{0,0,0}
\newcommand{\myVec}[1]{{\boldsymbol{#1}}}
\newcommand{\mySet}[1]{\mathcal{#1}}
\newcommand{\abs}[1]{{\left| #1 \right|}}
\newtheorem{theorem}{Theorem}
\newtheorem{definition}{Definition}
\newtheorem{proposition}{Proposition}
\newtheorem{lemma}{Lemma}
\newcommand{\cD}{\mathcal{D}}
\newcommand{\cM}{\mathcal{M}}
\newcommand{\cC}{\mathcal{C}}
\newcommand{\cP}{\mathcal{P}}
\newcommand{\cH}{\mathcal{H}}
\newcommand{\cG}{\mathcal{G}}
\newcommand{\cS}{\mathcal{S}}
\newcommand{\cJ}{\mathcal{J}}
\newcommand{\bR}{\mathbb{R}}
\newcommand{\bN}{\mathbb{N}}
\newcommand{\bK}{\mathbb{K}}
\newcommand{\bV}{\mathbb{V}}
\newcommand{\bP}{\mathbb{P}}
\newcommand{\1}{\mathbf{1}}
\newcommand{\E}{\mathbb{E}}
\newcommand{\vx}{\boldsymbol{x}}
\newcommand{\vh}{\boldsymbol{h}}
\newcommand{\vtheta}{\boldsymbol{\theta}}
\newcommand{\epsb}{\bar{\epsilon}}
\DeclareMathOperator*{\sign}{\text{sign}}
\DeclareMathOperator*{\argmin}{arg\,min}
\DeclareMathOperator*{\argmax}{arg\,max}
\DeclareMathOperator*{\lap}{\text{Laplace}}
\acrodef{ml}[ML]{machine learning} 
\acrodef{fl}[FL]{federated learning}
\acrodef{ldp}[LDP]{local differential privacy}
\acrodef{ucb}[UCB]{upper confidence bound}
\acrodef{dp}[DP]{differential privacy}
\acrodef{sgd}[SGD]{stochastic gradient descent}
\acrodef{fa}[FedAvg]{federated averaging} 
\acrodef{ppn}[PPN]{privacy preserving noise} 
\acrodef{iid}[i.i.d.]{independent and identically distributed} 
\acrodef{lm}[LM]{Laplace mechanism}
\acrodef{rv}[RV]{random variable}
\acrodef{mlp}[MLP]{multi-layer perceptron}
\acrodef{cnn}[CNN]{convolutional neural network}
\acrodef{snr}[SNR]{signal-to-noise ratio}
\acrodef{mse}[MSE]{mean squared error}
\acrodef{pbsfl}[PBSFL]{private bandit scheduling for federated learning}
\acrodef{cmab}[CMAB]{combinatorical \ac{mab}}
\acrodef{mab}[MAB]{multi-armed bandit}
\acrodef{dnn}[DNN]{deep neural network}
\acrodef{pause}[PAUSE]{\underline{p}rivacy-aware \underline{a}ctive \underline{u}ser \underline{se}lection}
\acrodef{fc}[FC]{fully-connected network}
\acrodef{cnn}[CNN]{convolutional neural network}
\acrodef{sa}[SA]{simulated annealing}
\acrodef{ucb}[ucb]{upper confidence bound}
\acrodef{cv}[CV]{coefficient of variation}
\title{
PAUSE: Low-Latency and Privacy-Aware Active User Selection for Federated Learning
}
\author{
\IEEEauthorblockN{Ori Peleg, Natalie Lang, Dan Ben Ami, Stefano Rini, Nir Shlezinger, and Kobi Cohen}

\thanks{Parts of this work were accepted for presentation in the 2025 IEEE International Conference on Acoustics, Speech, and Signal Processing (ICASSP) as the paper \cite{peleg2024pause}.
O. Peleg, N. Lang, D. Ben Ami, N. Shlezinger,  and K. Cohen are with School of ECE, Ben-Gurion University of the Negev, Beer-Sheva, Israel (email: \{oripele, langn, danbenam\}@post.bgu.ac.il;  \{nirshl; yakovsec\}@bgu.ac.il). 
S. Rini is with  the Department of ECE, National Yang-Ming Chiao-Tung University (NYCU), Hsinchu, Taiwan (email: stefano.rini@nycu.edu.tw). 
This research was supported by the Israeli Ministry of Science and Technology.}
}
\begin{document}
\include{enumitem}
\maketitle
%
%
\begin{abstract} 
\Ac{fl} enables multiple edge devices to collaboratively train a machine learning model without the need to share potentially private data.
Federated learning proceeds through iterative exchanges of model updates, which pose two key challenges: (i) the accumulation of privacy leakage over time and (ii) communication latency. These two limitations are typically addressed separately— (i) via perturbed updates to enhance privacy and (ii) user selection to mitigate latency—both at the expense of accuracy.  
%
In this work, we propose a method that jointly addresses the accumulation of privacy leakage and communication latency via active user selection, aiming to improve the trade-off among privacy, latency, and model performance.
To achieve this, we construct a reward function that accounts for these three objectives. Building on this reward, we propose a \ac{mab}-based algorithm,
termed \ac{pause} -- which dynamically selects a subset of users each round while ensuring bounded overall privacy leakage.
We establish a theoretical analysis, systematically showing that the regret growth rate of \ac{pause} follows that of the best-known rate in \ac{mab} literature. 
To address the complexity overhead of active user selection, we propose a simulated annealing-based relaxation of \ac{pause} and analyze its ability to approximate the reward-maximizing policy under reduced complexity.
We numerically validate the privacy leakage, associated improved latency, and accuracy gains of our methods for the federated training in various scenarios.
\end{abstract}
\begin{IEEEkeywords}
Federated Learning; Communication latency; Privacy; Multi-Armed Bandit; Simulated Annealing. 
\end{IEEEkeywords}

\acresetall

\section{Introduction}
\label{sec:intro} 
 The effectiveness of deep learning models heavily depends on the availability of large amounts of data.
In real-world scenarios, data is often gathered by edge devices such as mobile phones, medical devices, sensors, and vehicles. 
 Because these data often contain sensitive information, there is a pressing need to utilize them for training \acp{dnn} without compromising user privacy.
 A popular framework to enable training \acp{dnn} without requiring data centralization is that of \ac{fl}~\cite{mcmahan2017communication}.  
In \ac{fl}, each participating device locally trains its model in parallel, and a central server periodically aggregates these local models into a global one~\cite{kairouz2021advances}.

The distributed operation of \ac{fl}, and particularly the fact that learning is carried out using multiple remote users in parallel, induces several challenges that are not present in traditional centralized learning~\cite{gafni2022federated,li2020federated}. 
A key challenge stems from the fact that \ac{fl} involves repeated exchanges of highly-parameterized models between the orchestrating server and numerous users. 
This often entails significant communication latency which-- in turn-- impacts convergence, complexity, and scalability~\cite{chen2021communication}. 
Communication latency can be tackled by model compression~\cite{alistarh2018convergence,lang2025olala, han2020adaptive,reisizadeh2020fedpaq,shlezinger2020uveqfed}, and  via over-the-air aggregation in settings where the users share a common wireless channel~\cite{amiri2020machine,sery2020analog, yang2020federated}. 

A complementary approach for balancing communication latency, which is key for scaling \ac{fl} over massive networks, is {\em user selection}~\cite{mayhoub2024review,li2024comprehensive,10197174}. 
User selection limits the number of users participating in each round, traditionally employing pre-defined policies~\cite{xu2020client,abdulrahman2020fedmccs,rizk2022federated,fraboni2021clustered}. 
\textcolor{NewColor}{Alternatively, the user selection can be adapted in an active manner, with a leading framework for active user selection being that of \ac{mab}~\cite{xia2020multi,xu2021online, ami2023client,chen2024personalized, huang2020efficiency, yang2021federated, huang2022stochastic, shi2022vfedcs, wang2024fedmaba, guo2024auction, ZHU2024110512}. \ac{mab} enables active user selection by formulating a dedicated reward, with existing studies formulating reward based on latency~\cite{huang2020efficiency, xu2021online, xia2020multi, ami2023client, chen2024personalized}, class imbalance~\cite{yang2021federated}, unstable clients~\cite{huang2022stochastic, shi2022vfedcs}, and learning progress~\cite{wang2024fedmaba, guo2024auction, ZHU2024110512}.}

Another prominent challenge of \ac{fl} is associated with one of its core motivators--privacy preservation. While \ac{fl} does not involve data sharing, it does not necessarily preserve data privacy, as model inversion attacks were shown to unveil private information and even reconstruct the data from model updates~\cite{zhu2020deep,zhao2020idlg,huang2021evaluating,yin2021see}. The common framework for analyzing privacy leakage in \ac{fl} is based on \ac{ldp}~\cite{kim2021federated}. \ac{ldp} mechanisms  limit privacy leakage in a given \ac{fl} round, typically by employing \ac{ppn}~\cite{wei2020federated,lyu2021dp, lowy2021private}, that can also  be unified with model compression~\cite{lang2022joint,lang2023compressed}. However, this results in having the amount of leaked privacy grow with the number of learning rounds~\cite{dwork2010boosting}, degrading performance by restricting the number of learning rounds and necessitating dominant \ac{ppn}. Existing approaches to avoid accumulation of privacy leakage consider it as a separate task to tackling latency and scalability, often by focusing on a fixed pre-defined number of rounds~\cite{zhang2024dynamic},  or  by relying on an additional trusted coordinator unit~\cite{sun2020ldp,cheu2019distributed, balle2019privacy}, thus deviating from how \ac{fl} typically operates. 
\textcolor{NewColor}{The exploration of unified active user selection policies as means to jointly tackle privacy accumulation and latency in a manner which does not alter the operation of \ac{fl}, i.e., does not require additional infrastructure and/or messages beyond conventional \ac{fl} protocols, was not considered to date, and is the focus of our work.} 

\textcolor{NewColor}{In particular}, we propose a novel framework for private and scalable multi-round \ac{fl} with low latency via {\em active user selection}. Our proposed method, coined {\em \ac{pause}}, is based on a generic per-round privacy budget, designed to avoid leakage surpassing a pre-defined limit for any number of \ac{fl} rounds. This operation results in users inducing more \ac{ppn} each time they participate. The budget is accounted for in formulating a dedicated reward function for active user selection that balances privacy, communication, and generalization. Based on the reward, we propose a \ac{mab}-based policy that prioritizes users with lesser \ac{ppn}, balanced with grouping users of similar expected communication latency and exploring new users for enhancing generalization. We provide an analysis of \ac{pause}, rigorously proving that its regret growth rate obeys the desirable growth in \ac{mab} theory~\cite{zhao2022multi,auer2002finite,chen2013combinatorial}. 


The direct application of \ac{pause} involves a brute search of a combinatorial nature, whose complexity grows dramatically with the number of users.
\textcolor{NewColor}{Nonetheless, we showcase that under some structured dependencies of the reward on the generalization and privacy terms, particularly focusing on settings where these dependencies are given by averaged terms, one can apply \ac{pause} with affordable complexity. We demonstrate this through an efficient algorithm that is shown to implement the desired active selection policy. For the case of generic generalization and privacy dependencies, we circumvent this excessive complexity and enhance scalability by proposing a reduced complexity implementation of \ac{pause} based on \ac{sa}~\cite{hajek1988cooling}, coined {\em SA-\ac{pause}}.} We analyze the computational complexity of SA-\ac{pause}, quantifying its reduction compared to direct \ac{pause}, and rigorously characterize conditions for it to achieve the same performance as costly brute search. 
We evaluate \ac{pause} in learning of different scenarios with varying  \acp{dnn}, datasets, privacy budgets, and data distributions. Our experimental studies systematically show that by fusing privacy enhancement and user selection, \ac{pause} enables accurate and rapid learning, approaching the performance of \ac{fl} without such constraints and notably outperforming alternative approaches that do not account for leakage accumulation. We also show that SA-\ac{pause} approaches the performance of direct \ac{pause} in both privacy leakage, model accuracy, and latency, while supporting scalable implementations on large \ac{fl} networks.

The rest of this paper is organized as follows.
We review some necessary preliminaries and formulate the problem in Section~\ref{sec:System Model}. \ac{pause} is introduced and analyzed in  Section~\ref{sec:Selection}, while its reduced complexity, SA-\ac{pause}, is detailed in Section~\ref{sec:ALSA}. 
Numerical simulations are reported in Section~\ref{sec:Experimental Study}, and Section~\ref{sec:conclusions} provides concluding remarks.

\noindent
\emph{Notation:} Throughout this paper, we use boldface lower-case letters for vectors, e.g., $\vx$. 
The stochastic expectation, probability operator, indicator function, and $\ell_2$ norm are denoted by $\E[\cdot]$, $\bP (\cdot)$, $\1 (\cdot)$,  and $\|\cdot\|$, respectively. For a set $\mathcal{X}$, we write $\abs{\mathcal{X}}$ as its cardinality.

\section{System Model and Preliminaries}
\label{sec:System Model}
This section reviews the necessary background for deriving \ac{pause}.  We start by recalling the \ac{fl} setup and basics in \ac{ldp} in Subsections~\ref{ssec:FL}-\ref{ssec:Preliminaries-LDP}, respectively. Then, we formulate the active user selection problem in Subsection~\ref{ssec:Problem formulation}.

\subsection{Preliminaries: Federated Learning}
\label{ssec:FL} 
\subsubsection{Objective}

The \ac{fl} setup involves the collaborative training of a machine learning model 
\(\vtheta \in \bR^d\), carried out by \(K\) remote users and orchestrated by a server.
Let the set of users be indexed by \(\bK = \{1, \ldots, K\}\), and let \(\cD_k\) denote
the private dataset of user \(k \in \bK\), which cannot be shared with the server.
Define \(F_k(\vtheta)\) as the empirical risk of a model \(\vtheta\) evaluated on \(\cD_k\).
The goal is to determine the \(d \times 1\) optimal parameter vector \(\vtheta^{\rm opt}\)
that minimizes the overall loss across all users, that is
\begin{equation}\label{eq:theta_opt_def}
    \vtheta^{\rm opt} = \argmin_{\vtheta} \left\{F(\vtheta)\triangleq \sum_{k=1}^K \frac{\abs{\cD_k}}{\abs{\cD}} F_k\left(\vtheta\right)\right\}.
\end{equation}

\subsubsection{Learning Procedure}
\ac{fl} operates over multiple iterations divided into rounds~\cite{gafni2022federated}. At \ac{fl} round $t$, the server selects a set of participating users $\mathcal{S}_t \subseteq \bK$, and sends the current model $\vtheta_t$ to them. Each participating user of index $k\in \mathcal{S}_t$ then trains $\vtheta_t$ on its local data $\cD_k$ using, e.g., multiple iterations of mini-batch \ac{sgd}~\cite{li2019convergence}, into the updated  $\vtheta^k_{t+1}$. 

The model update obtained by the $k$th user, denoted $\vh^k_{t+1} = \vtheta^k_{t+1} - \vtheta_t$, is shared with the server, which aggregates the local updates into a global model update. 
The aggregation rule commonly employed by the central server in \ac{fl} is that of \ac{fa}~\cite{mcmahan2017communication}, in which the global model is obtained as
\begin{align}\label{eq:fl_update}
    \vtheta_{t+1} = \vtheta_t + \sum_{k\in \mathcal{S}_t}  \alpha_t^k\vh^k_{t+1}=\sum_{k\in \mathcal{S}_t}  \alpha_t^k  \vtheta^k_{t+1},
\end{align}
where $\alpha_t^k = \frac{\abs{\cD_k}}{\abs{\cup_{j \in \mathcal{S}_t}\cD_j}}$.
The updated global model is again distributed to the users and the learning procedure continues.

\subsubsection{Communication Model}
Communication between the users and the server is associated with some varying latency~\cite{gafni2022federated}. We model this delay via the \acl{rv} $\tau_{t,k}$, representing the  
total latency in the $t$th  round between the server and the $k$th user. Accordingly, the communication latency of the whole round, denoted as $\tau_t^{\rm total}$, is determined by the user with the highest latency
\begin{equation}
\label{eqn: Commlatency}
    \tau_t^{\rm total} = \max_{k \in \mathcal{S}_t} \tau_{t,k}.
\end{equation}

The communication latency $\tau_{t,k}$ varies over time (due to  fading~\cite{chen2021communication}) and between users  (due to system heterogeneity~\cite{lang2024stragglers}). As the latter is device specific, we model $\tau_{t,k}$ as being drawn in an i.i.d. manner from a device specific distribution~\cite{xia2020multi}, denoted $\tau_k$. We further assume the users differ in their expected latencies, $\E[\tau_k]$. We denote the minimal difference between these terms as $\delta \triangleq \min_{i\neq j \in \bK}  |\E[\tau_i] - \E[\tau_j]|$, and assume that there is a minimal latency corresponding to, e.g., the minimal delay. Mathematically, this implies that there exists some  $ \tau_{\min}>0$ such that $\tau_{t,k} \geq \tau_{\min}$ with probability one.

\subsection{Preliminaries: Local Differential Privacy}
\label{ssec:Preliminaries-LDP}
One of the main motivations for \ac{fl} is the need to preserve the privacy of the users' data. Nonetheless, the concealment of the dataset of the $k$th user, $\mathcal{D}_k$, in favor of sharing the model updates trained using $\mathcal{D}_k$, was shown to be potentially leaky~\cite{zhu2020deep,zhao2020idlg,huang2021evaluating,yin2021see}. Therefore, to satisfy the privacy requirements of \ac{fl}, dedicated privacy mechanisms are necessary. 

In \ac{fl}, privacy is commonly quantified in terms of \ac{ldp}~\cite{kasiviswanathan2011can, wang2020federated}, as this metric assumes an untrusted server by the users.

\begin{definition}[$\epsilon$-\ac{ldp}~\cite{wang2020comprehensive}]\label{def:LDP}
A randomized  mechanism $\cM$ satisfies $\epsilon$-\ac{ldp} if for any pairs of input values $v,v'$ in the domain of $\cM$ and for any possible output $y$ in it, it holds that
\begin{align}\label{eq:LDP}
    \bP [\cM(v)=y] \leq e^\epsilon \bP [\cM(v')=y].
\end{align}
\end{definition}
In Definition~\ref{def:LDP}, a smaller $\epsilon$ means stronger privacy protection.
A common mechanism to achieve $\epsilon$-\ac{ldp} is the \ac{lm}. Let
$\lap(\mu,b)$ be the Laplace distribution with location $\mu$ and scale $b$. The \ac{lm} is defined as:
\begin{theorem}[LM~\cite{dwork2016calibrating}]\label{thm:lm}
Given any function $f:D\to \bR^d$ where $D$ is a domain of datasets, the \ac{lm} defined as :
\begin{equation}\label{eq:lm}
    \cM^{\rm Laplace}\left(f(x),\epsilon\right)=f(x)+
    {\left[z_1,\dots,z_d\right]}^T,
\end{equation}
is $\epsilon$-\ac{ldp}. In \eqref{eq:lm},  $z_i\overset{\acs{iid}}{\sim}\lap\left(0, \Delta f/\epsilon \right)$, i.e., they obey an i.i.d. zero-mean Laplace distribution with scale $\Delta f/\epsilon $, 
where 

    $\Delta f\triangleq\max_{x,y\in D} {||f(x)-f(y)||}_1$.    
 
\end{theorem}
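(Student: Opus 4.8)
The plan is to verify Definition~\ref{def:LDP} directly, by comparing the output densities of the Laplace mechanism for two arbitrary inputs. Since the additive noise in \eqref{eq:lm} is continuous, I read the probabilities in \eqref{eq:LDP} through the corresponding density — equivalently, through the mass assigned to an arbitrary measurable output set, which reduces to a pointwise density-ratio bound. Fix any datasets $x,x'\in D$ and any target output $\vw\in\bR^d$. Because $z_1,\dots,z_d$ are i.i.d.\ $\lap(0,\Delta f/\epsilon)$, with scalar density $p(z)=\tfrac{\epsilon}{2\Delta f}\exp(-\epsilon|z|/\Delta f)$, the density of $\cM^{\rm Laplace}(f(x),\epsilon)$ evaluated at $\vw$ factorizes over coordinates as $p_x(\vw)=\prod_{i=1}^d \tfrac{\epsilon}{2\Delta f}\exp\!\big(-\tfrac{\epsilon}{\Delta f}\,|w_i-f(x)_i|\big)$, and similarly for $x'$.

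Next I would form the likelihood ratio of the two inputs: the normalizing constants cancel and the exponents combine, giving
\begin{equation}
\frac{p_x(\vw)}{p_{x'}(\vw)}=\exp\!\left(\frac{\epsilon}{\Delta f}\sum_{i=1}^d\big(|w_i-f(x')_i|-|w_i-f(x)_i|\big)\right).
\end{equation}
The key estimate is the reverse triangle inequality applied coordinatewise, $|w_i-f(x')_i|-|w_i-f(x)_i|\le |f(x)_i-f(x')_i|$, so that the sum in the exponent is bounded by $\sum_{i=1}^d|f(x)_i-f(x')_i|=\|f(x)-f(x')\|_1$.

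To close the argument I would invoke the definition of $\ell_1$-sensitivity, $\|f(x)-f(x')\|_1\le \Delta f=\max_{u,v\in D}\|f(u)-f(v)\|_1$, which yields $p_x(\vw)/p_{x'}(\vw)\le \exp(\epsilon\cdot\Delta f/\Delta f)=e^\epsilon$. As $\vw$ and $x,x'$ are arbitrary (and the bound is symmetric under $x\leftrightarrow x'$, hence holds in both directions), integrating this pointwise inequality over any measurable output set delivers \eqref{eq:LDP}, establishing $\epsilon$-\ac{ldp}. I do not anticipate a real obstacle: the proof is essentially the single use of the triangle inequality together with the sensitivity bound, and the only point deserving mild care is the passage from the density-ratio inequality to the probability statement of Definition~\ref{def:LDP} for a mechanism with continuous output.
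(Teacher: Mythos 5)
The paper does not prove this theorem itself --- it is imported verbatim from the cited reference \cite{dwork2016calibrating} as a preliminary. Your argument is the standard and correct proof of that result (pointwise density-ratio bound via the coordinatewise triangle inequality, then the sensitivity bound $\|f(x)-f(x')\|_1\le\Delta f$, then integration over measurable output sets), and it correctly uses the \emph{local} DP convention of the paper in which $\Delta f$ is a maximum over all pairs in $D$ rather than over neighboring datasets; there is nothing to add.
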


\ac{ldp} mechanisms, such as \ac{lm}, guarantee $\epsilon$-\ac{ldp} for a given query of $\mySet{M}$ in \eqref{eq:LDP}. In \ac{fl}, this amounts for a single model update. As \ac{fl} involves multiple rounds, one has to account for the {\em accumulated} leakage, given by the  composition theorem: 
\begin{theorem}[Composition ~\cite{wang2020comprehensive}]\label{thm:ct}
Let $\cM_i$ be an $\epsilon_i$-\ac{ldp} mechanism on input $v$, and $\cM(v)$ is the sequential composition of $\cM_1(v),...,\cM_m(v)$, then $\cM(v)$ satisfies $\sum_{k=1}^m \epsilon_i$-\ac{ldp}.
\end{theorem}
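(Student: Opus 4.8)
The plan is to verify Definition~\ref{def:LDP} directly for the composed mechanism $\cM$, by factoring the likelihood of any joint output into a product of the per-mechanism likelihoods and then invoking the $\epsilon_i$-\ac{ldp} guarantee of each $\cM_i$ termwise. First I would fix an arbitrary pair of inputs $v,v'$ in the common domain and an arbitrary output $y=(y_1,\dots,y_m)$ of $\cM$, where each $y_i$ lies in the output domain of $\cM_i$. The key structural fact is that ``sequential composition'' means the internal randomness used by the $m$ mechanisms is independent, so the law of $\cM(v)$ is the product law $\bP[\cM(v)=y]=\prod_{i=1}^m \bP[\cM_i(v)=y_i]$.

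Next I would apply the $\epsilon_i$-\ac{ldp} inequality of each $\cM_i$, namely $\bP[\cM_i(v)=y_i]\le e^{\epsilon_i}\,\bP[\cM_i(v')=y_i]$, and multiply these $m$ bounds. Recombining the resulting product on the right-hand side back into $\bP[\cM(v')=y]$ gives $\bP[\cM(v)=y]\le e^{\sum_{i=1}^m \epsilon_i}\,\bP[\cM(v')=y]$, which is exactly the $\bigl(\sum_{i=1}^m \epsilon_i\bigr)$-\ac{ldp} condition. For continuous output spaces (as is the case for the \ac{lm} of Theorem~\ref{thm:lm}), the same computation runs verbatim with probability mass functions replaced by densities, or, to be fully rigorous, by integrating the pointwise density bound over an arbitrary measurable output event.

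The one place requiring care --- and what I expect to be the main obstacle --- is the adaptive variant of sequential composition, in which the choice of $\cM_i$, and hence its output law, may depend on the realized outputs $y_1,\dots,y_{i-1}$ of the earlier mechanisms. There the plain product decomposition must be replaced by the chain rule $\bP[\cM(v)=y]=\prod_{i=1}^m \bP[\cM_i(v)=y_i\mid y_1,\dots,y_{i-1}]$, and one additionally needs that, conditioned on any fixed prefix $y_1,\dots,y_{i-1}$, the mechanism $\cM_i$ is still $\epsilon_i$-\ac{ldp} in its input; applying the per-mechanism bound to each conditional factor and multiplying again yields the same $e^{\sum_i \epsilon_i}$ factor. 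If the statement is intended only for the non-adaptive case, this subtlety disappears and the two-line product argument suffices; I would flag which regime is meant and present the chain-rule version as it subsumes both.
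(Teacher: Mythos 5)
Your proof is correct. Note, however, that the paper does not prove Theorem~\ref{thm:ct} at all --- it is stated as a known preliminary and attributed to the cited reference, so there is no in-paper argument to compare against. Your two-line product decomposition $\bP[\cM(v)=y]=\prod_{i=1}^m \bP[\cM_i(v)=y_i]$ followed by the termwise application of each $\epsilon_i$-\ac{ldp} bound is the standard argument and suffices for the non-adaptive reading of ``sequential composition'' used here; your remark on the adaptive chain-rule variant is a sensible strengthening but is not needed for the way the theorem is invoked in this paper, where each round's mechanism (the \ac{lm} with a prefixed budget $\epsilon_i$) is determined by the privacy schedule rather than by earlier outputs.
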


Theorem \ref{thm:ct} indicates that the privacy leakage of each user in \ac{fl} is accumulated as the training proceeds.

\subsection{Problem Formulation}
\label{ssec:Problem formulation}

Our goal is to design a privacy leakage policy alongside privacy-aware user selection. Formally, we aim to set for every round $t \in \bN$ an algorithm that selects $m=|\mathcal{S}_t|$ users, while setting the privacy leakage budget $\{\epsilon_{k,t}\}_{k \in \mathcal{S}_t}$, 
\textcolor{NewColor}{without requiring any prior knowledge on the distribution of the latency \acp{rv} $\{\tau_k\}$.}
These policies should account for the following considerations:
\begin{enumerate}[label={C\arabic*}]
    \item \label{itm:gen} Optimize the accuracy of the trained $\vtheta$ \eqref{eq:theta_opt_def}.
    \item \label{itm:latency} Minimize the overall latency due to \eqref{eqn: Commlatency}.
    \item \label{itm:privacy} Maintain  $\epsb$-\ac{ldp}, i.e., the overall leakage by each user should not exceed $\epsb$, where $\epsb$ is a pre-defined constant.
   \item \label{itm:complexity} Operate with limited complexity to support real-time implementation in large-scale networks. 
\end{enumerate}

The considerations above are addressed in the subsequent sections. 
We first focus solely on considerations~\ref{itm:gen}-\ref{itm:privacy}, based on which we present \ac{pause} in Section~\ref{sec:Selection}. Subsequently, Section~\ref{sec:ALSA} adapts \ac{pause} to accommodate consideration~\ref{itm:complexity}, yielding SA-\ac{pause}, thereby jointly tackling~\ref{itm:gen}-\ref{itm:complexity}.


\section{Privacy-Aware Active User Selection}
\label{sec:Selection}
This section introduces \ac{pause}. We first formulate its time-varying privacy budget policy and associated reward in Subsection~\ref{ssec:policy}. The resulting user selection algorithm is detailed in Subsection~\ref{ssec:PAUSE intro}, with its regret growth analyzed in Subsection~\ref{ssec:regret analysis}. We conclude with a discussion in Subsection~\ref{ssec:discussion}.

\subsection{Reward and Privacy Policy}
\label{ssec:policy}
The formulation of \ac{pause} relies on two main components: $(i)$ a prefixed round-varying privacy budget; and $(ii)$ a reward holistically accounting for privacy, latency, and generalization.
The {\em privacy policy} is designed to ensure that \ref{itm:privacy} is preserved regardless of the number of iterations each user participated in. \textcolor{NewColor}{Namely, for a given overall privacy leakage $\epsb$, our methodology sets a sequence of round-varying privacy budgets.} Accordingly, we define a sequence $\{\epsilon_i\}$ with $\epsilon_i >0$, satisfying:
\begin{equation}
\label{eqn:epsilonSeq}
    \sum_{i=1}^{\infty} \epsilon_i = \epsb,
\end{equation}
for $\epsb$ finite.
Using the sequence $\{\epsilon_i\}$, the privacy budget of any user at the $i$th time it participates in training the model is set to $\epsilon_i$, and achieved using, e.g., \ac{lm}. This guarantees that  
 \ref{itm:privacy} holds. One candidate setting, which is also used in our experiments, sets  $\epsilon_i = \epsb{(e^{\eta}-1)} e^{- \eta i}$. This guarantees achieving asymptotic leakage of $\epsb$ by the limit of a geometric column. for which \eqref{eqn:epsilonSeq} holds when $\eta>0$.

 The {\em reward} guides the active user selection procedure and is based upon two terms. 
 The first is the {\em privacy reward}, which accounts for the fact that our privacy policy has users introduce more dominant \ac{ppn} each time they participate. The privacy reward assigned to the $k$th user at round $t$ is 
 \begin{equation}\label{eq: privacy reward function}
     p_k(t) \triangleq  1- \frac{\sum_{t=1}^{T_k(t)} \epsilon_t}{\epsb},
 \end{equation}
where $T_k(t)$ is the number of rounds the $k$th user has been selected up to and including the $t$th round, i.e., $T_k(t) \triangleq \sum_{i=1}^t \1(k \in \mathcal{S}_t)$. The privacy reward \eqref{eq: privacy reward function} yields higher values to users who have participated in fewer rounds.

The second term is the  {\em generalization reward}, designed to meet  \ref{itm:gen}. It assigns higher values for users whose data have been underutilized compared to the relative size of their data from the whole available data, $ \frac{|\cD_k|}{|\cD|}$. 
We adopt the generalization reward proposed in~\cite{ami2023client}, which was shown to account for both i.i.d. balanced data and non-i.i.d. imbalanced data cases, and rewards the $k$th user in an $m$-sized group at round  $t$ via the function 
\begin{equation}\label{eq: generalization reward}
    g_k(t) \triangleq \bigg|\frac{m}{{|\cD|}/{|\cD_k|}} - \frac{T_k(t)}{t}\bigg|^\beta \cdot \sign \biggl(\frac{m}{{|\cD|}/{|\cD_k|}} - \frac{T_k(t)}{t}\biggr).
\end{equation}
In \eqref{eq: generalization reward}, $\beta >1$ is a hyper-parameter that adjusts the fuzziness of the function, i.e., higher $\beta$ yields lower absolute value where the other parameters are fixed.
Fig.~\ref{fig:g_function} describes $g_k(\cdot)$ as a function of ${T_k(t)}/{t}$, and illustrates the effect of different $\beta$ values as means of balancing the reward assigned to users that participated much (high ${T_k(t)}/{t}$).

\begin{figure}
    \centering
    \includegraphics[width=1\linewidth]{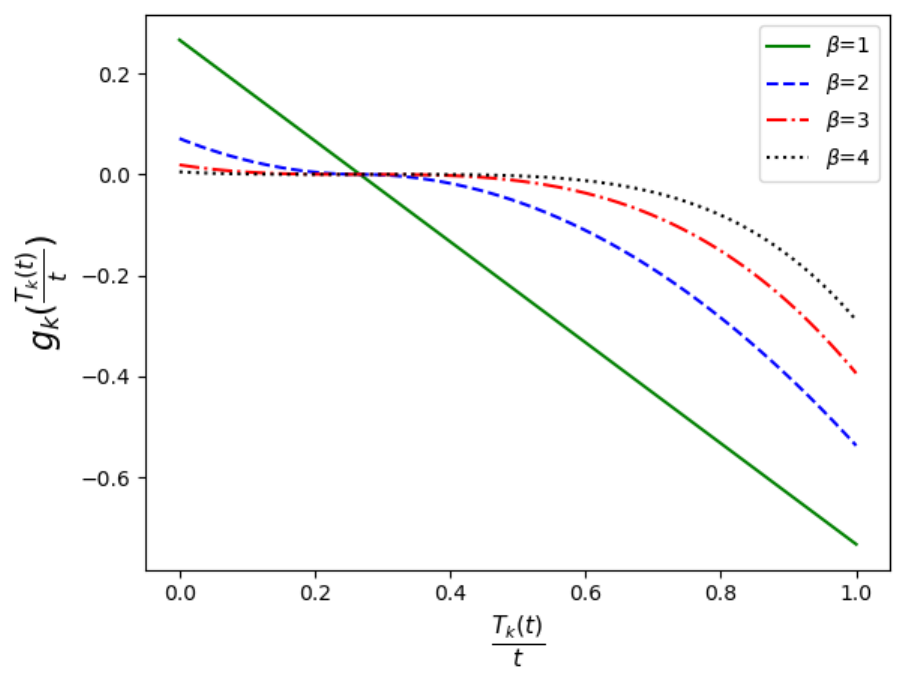}
    \vspace{-0.6cm}
    \caption{Generalization reward \eqref{eq: generalization reward} for different values of $\beta$, with  $\frac{|\cD|}{|\cD_k|} = K$.}
    \label{fig:g_function}
\end{figure}

\color{NewColor}
Our proposed {\em reward}  encompasses the above terms, grading the selection of a group of users $\cS$ of size $m$ at round $t$ as
\begin{align} 
    r(\cS, t) &\triangleq \frac{\tau_{\min}}{\max_{k \in \cS} \tau_{k,t}} \!+\!  \alpha\cdot \Phi_{\rm g}\left(\{g_k(t-1)\}_{k\in \cS}, \textcolor{NewColor2}{\cS} \right)  \notag \\
    &\qquad\qquad+ \gamma \cdot \Phi_{\rm p}\left(\{p_k(t-1)\}_{k\in \cS},  \textcolor{NewColor2}{\cS}\right)  \notag \\ 
    &=\min_{k \in \cS} \frac{\tau_{\min}}{\tau_{k,t}} +  \alpha\cdot \Phi_{\rm g}\left(\{g_k(t-1)\}_{k\in \cS},  \textcolor{NewColor2}{\cS} \right) \notag \\
    &\qquad\qquad + \gamma \cdot \Phi_{\rm p}\left(\{p_k(t-1)\}_{k\in \cS},  \textcolor{NewColor2}{\cS} \right),
    \label{eq: reward func} 
\end{align}
where $\Phi_{\rm g}(\cdot)$ and $\Phi_{\rm p}(\cdot)$ are bounded functions.
\color{black}
The reward in \eqref{eq: reward func} is composed of three additive terms which correspond to~\ref{itm:latency}, \ref{itm:gen} and \ref{itm:privacy}, respectively, with $\alpha$ and $\gamma$ being hyper-parameters balancing these considerations. 
At this point, we can make \textcolor{NewColor}{three} remarks regarding the reward \eqref{eq: reward func}:
\begin{enumerate}[leftmargin=*]

    \item Both $g_k(\cdot)$ and $p_k(\cdot)$ penalize repeated selection of the same users. 
    However, each rewards differently, based on generalization and privacy considerations. The former accounts for the relative dataset sizes of the users, while the latter doesn't.    
    In the  case of homogeneous data, where for all $k \in \bK$, $|\cD_k| = \frac{|\cD|}{K}$, both $g_k(\cdot)$ and $p_k(\cdot)$ play a similar role. However, they differ significantly in the non-i.i.d case.
    \item The value of the first term is determined solely by the slowest user. This non-linearity, combined with the two other terms, directs the algorithm we derive from this reward to select a group of users with similar latency in a given round.\label{itm: non-lineraity in the reawrd}
    \item  \textcolor{NewColor2}{The terms that account for the generalization and privacy are formulated in \eqref{eq: reward func} as the generic bounded functions $\Phi_{\rm g}(\cdot)$ and $\Phi_{\rm p}(\cdot)$. This formulation allows to encompass a broad range of rewards assigned to a selected set of users based on privacy and generalization. For instance, at the system level, federated learning often operates under resource constraints: multiple users may share the same access point, subnet, or geographic region. If too many users from the same cluster are chosen at once, communication can become congested and the diversity of information is reduced \cite{ gafni2022federated} To avoid this, the generalization term can penalize selections that overload a shared resource, encouraging the chosen set of users to be spread across different clusters (as also considered in our numerical study in Section~\ref{sec:Experimental Study}). This helps maintain both communication efficiency and robustness of the model. At the data level, another important factor is class imbalance. If the selected users all contribute data with very similar label distributions, the aggregated model may overfit to certain classes and fail to generalize. The generalization term can instead reward sets of users whose data distributions are complementary, for example by discouraging excessive similarity among their local histograms. In this way, the selection mechanism promotes richer data diversity across the system. We also provide in Subsection~\ref{ssec:special} an analysis of a special case for which these terms are given by averaging functions.}
\end{enumerate}

\subsection{PAUSE Algorithm}
\label{ssec:PAUSE intro}
Here, we present \ac{pause}, which is a \acl{cmab}-based~\cite{chen2013combinatorial} algorithm utilizing the mentioned reward \eqref{eq: reward func}. To derive \ac{pause}, we seek a policy $\Pi \triangleq (\mathcal{S}_1,\mathcal{S}_2,...)$ such that $\E[\sum_{t=1}^n r(\mathcal{S}_t,t)]$ is maximized over  $n$. To maximize the given term, as is customary in \ac{mab} settings, we aim to minimize the {\em regret}, defined as the loss of the algorithm compared to an algorithm composed by a Genie that has prior knowledge of the expectations of the \acp{rv}, i.e., of $\mu_k \triangleq \E[{\tau_{\min}}/{\tau_k}]$. 

We define the Genie's algorithm as  selecting 
\begin{equation}\label{eq: Genie's policy}
    \cG_t \triangleq \argmax_{\cS \subseteq \bK; |\cS| = m} \{C^{\cG}(\cS,t)\},
\end{equation}
where 
\color{NewColor}
\begin{align*}
    C^{\cG}(\cS,t) \triangleq \min_{k \in \cS} \mu_k &+ \alpha\cdot \Phi_{\rm g}\left(\{g_k(t-1)\}_{k\in \cS},  \textcolor{NewColor2}{\cS} \right) \notag \\& + \gamma \cdot \Phi_{\rm p}\left(\{p_k(t-1)\}_{k\in \cS},  \textcolor{NewColor2}{\cS} \right).
\end{align*}
\color{black}
 The Genie policy \eqref{eq: Genie's policy} attempts to maximize the expectation of the reward~\eqref{eq: reward func} in each round, by replacing the order of the expectation and the $\min_{k \in \cS}$ operator.
 As the reward $C^{\cG}$ is history-dependent, the  Genie's policy is history-dependent as well.

We use the Genie policy to derive \ac{pause}, denoted $\cP \triangleq (\cP_1, \cP_2,\ldots)$, as an \ac{ucb}-type algorithm~\cite{auer2002finite}. Accordingly, \ac{pause} estimates the unknown expectations $\{\mu_k\}$ with their empirical means, computed using the latency measured in previous rounds via 
\begin{equation}\label{eq:empirical means}
    \overline{\mu_k}(n) \triangleq \frac{1}{T_k(n)} \sum_{t=1}^n \frac{\tau_{\min}}{\tau_{k,t}} \cdot \1(k \in \cP_t).
\end{equation}
 Note that \eqref{eq:empirical means} can be efficiently updated  in a recursive manner, as 
\begin{equation}\label{eq: ma update rule}
    \overline{\mu_k}(t) = \frac{T_k(t-1)}{T_k(t)}\overline{\mu_k}(t-1) + \frac{\1(k \in \cP_t)}{T_k(t)}\frac{\tau_{min} }{\tau_{k,t}}.
\end{equation}
\ac{pause} uses \eqref{eq:empirical means} to compute the \ac{ucb} terms for each user at the end of the $t^{\rm th}$ round~\cite{auer2002finite}, via 
\begin{equation}\label{eq: ucb definition}
    {\rm ucb}(k,t) \triangleq \overline{\mu_k}(t) + \sqrt{\frac{(m+1)\log(t)}{T_k(t)}}. 
\end{equation}
The \ac{ucb} term in \eqref{eq: ucb definition} is designed to tackle \ref{itm:latency}. Its formulation encapsulates the inherent exploration vs. exploitation trade-off in \ac{mab} problems, boosting exploitation of the fastest users in expectation using $\overline{\mu_k}(t)$, while encouraging exploration of other users in its second term. 
The resulting user selection rule at round $t$ is 
\color{NewColor}
\begin{align} 
    \cP_t =\argmax_{\cS \subseteq \bK; |\cS| = m} \biggr\{&\min_{k \in \cS} {\rm ucb}(k,t-1) \notag \\
   & + \alpha\cdot \Phi_{\rm g}\left(\{g_k(t-1)\}_{k\in \cS},  \textcolor{NewColor2}{\cS} \right) \notag \\& + \gamma \cdot \Phi_{\rm p}\left(\{p_k(t-1)\}_{k\in \cS},  \textcolor{NewColor2}{\cS} \right) \biggl\}.
    \label{eq: pause's policy} 
\end{align}
\color{black}

The overall active user selection procedure is summarized as Algorithm~\ref{alg:Pause}. 
The chosen users send their noisy local model updates to the server, which updates the global model by~\eqref{eq:fl_update} and sends it back to all the users in $\bK$. At the end of every round, we update the users' reward terms for the next round, in which $p_k(t)$ and $\overline{\mu_k}(t)$ change their values only for participating users $k \in \cP_t$.  Note that, by the formulation of Algorithm~\ref{alg:Pause}, it holds that when $m$ is an integer divisor of $K$, then in the first $\frac{K}{m}$ rounds, the server chooses every user exactly once due to the initial conditions.

\begin{algorithm}
    \caption{\ac{pause}}
    \label{alg:Pause}
    \SetKwInOut{Input}{Input} 
    \Input{Set of users $\bK$; Number of active users $m$}   
    \SetKwInOut{Initialization}{Init}
    \Initialization{Set $T_k(0), \overline{\mu_k}(0),p_k(0) \gets 0$; ${\rm ucb}(k,0)\gets\infty$; \newline
    Initial model parameters $\myVec{\theta}_0$}
    {
        \For{$t= 1,2 \ldots$}{%
                    Select $\cP_t$ via \eqref{eq: pause's policy}\; \label{stp:UserSel}

                    Share $\myVec{\theta}_{t-1}$ with users in $\cP_t$\;

                    Aggregate global model $\myVec{\theta}_{t}$ via \eqref{eq:fl_update};

                    \For{$k \in \bK$}{
                        Update $T_k(t) \gets T_k(t-1) + \1(k\in \cP_t)$; 

                        Update empirical estimate $\overline{\mu_k}(t)$ via \eqref{eq: ma update rule};

                        Update  ${\rm ucb}(k,t)$ via~\eqref{eq: ucb definition};

                    }
                    
                    }
        \KwRet{$\myVec{\theta}_t$}
  }
\end{algorithm}

\subsection{Regret Analysis}
\label{ssec:regret analysis}
To evaluate \ac{pause}, we next analyze its {\em regret}, which for a policy $\Pi$ is defined as the expectation of the reward gap between the given policy and the Genie's policy:
\begin{equation}\label{eq: regret definition}
    R^{\Pi}(n) \triangleq \E\Bigl[\sum_{t=1}^n r(\cG_t,t) - r(\mathcal{S}_t,t)\Bigr].
\end{equation}
We define the maximal reward gap for any policy as $\Delta_{\max} \triangleq \max_{t \in \bN, \Pi} r(\cG_t,t) - r(\mathcal{S}_t,t)$. This quantity is bounded as stated the following lemma:
\color{NewColor}
\begin{lemma}
\label{lem:DeltBound}
    User selection via \eqref{eq: pause's policy} with the reward \eqref{eq: reward func} satisfies 
    \begin{equation}
    \label{eqn:DeltBound}
        \Delta_{\max} \leq \max_{k\in \bK} \mu_{k} -\min_{k\in \bK}\mu_{k} + \alpha \Delta_{\Phi_{\rm g}} + \gamma \Delta_{\Phi_{\rm p}},  
    \end{equation}
    where $\Delta_{\Phi_{\rm g}}$ and $\Delta_{\Phi_{\rm p}}$ are the ranges of the bounded $\Phi_{\rm g}$ and $\Phi_{\rm p}$, respectively.
\end{lemma}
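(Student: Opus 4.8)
The plan is to bound the per-round reward gap $r(\cG_t,t) - r(\mathcal{S}_t,t)$ uniformly over $t$ and over all policies, by controlling each of the three additive terms in the reward \eqref{eq: reward func} separately. Since $r$ decomposes as a latency term $\min_{k\in\cS}\tau_{\min}/\tau_{k,t}$, a generalization term $\alpha\,\Phi_{\rm g}(\cdot)$, and a privacy term $\gamma\,\Phi_{\rm p}(\cdot)$, and since $\cG_t$ and $\mathcal{S}_t$ share the same round index $t$ (hence the same history-dependent arguments $\{g_k(t-1)\}$ and $\{p_k(t-1)\}$ are drawn from the same ranges), the gap is at most the sum of the worst-case gaps of the three terms. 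I would write $r(\cG_t,t)-r(\mathcal{S}_t,t) \le \big[\min_{k\in\cG_t}\tfrac{\tau_{\min}}{\tau_{k,t}} - \min_{k\in\mathcal{S}_t}\tfrac{\tau_{\min}}{\tau_{k,t}}\big] + \alpha\big[\Phi_{\rm g}(\cG_t\text{-args}) - \Phi_{\rm g}(\mathcal{S}_t\text{-args})\big] + \gamma\big[\Phi_{\rm p}(\cG_t\text{-args}) - \Phi_{\rm p}(\mathcal{S}_t\text{-args})\big]$.

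For the second and third terms, boundedness of $\Phi_{\rm g}$ and $\Phi_{\rm p}$ immediately gives that each bracket is at most the respective range, $\Delta_{\Phi_{\rm g}}$ and $\Delta_{\Phi_{\rm p}}$, yielding the contributions $\alpha\Delta_{\Phi_{\rm g}}$ and $\gamma\Delta_{\Phi_{\rm p}}$. For the latency term, the key observation is that for any set $\cS$ of size $m$, the quantity $\min_{k\in\cS}\tau_{\min}/\tau_{k,t}$ is a minimum over a subset of the realized values $\{\tau_{\min}/\tau_{k,t}\}_{k\in\bK}$, hence lies between $\min_{k\in\bK}\tau_{\min}/\tau_{k,t}$ and $\max_{k\in\bK}\tau_{\min}/\tau_{k,t}$. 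Therefore the latency gap between any two size-$m$ sets is at most $\max_{k\in\bK}\tfrac{\tau_{\min}}{\tau_{k,t}} - \min_{k\in\bK}\tfrac{\tau_{\min}}{\tau_{k,t}}$. The remaining question is how to pass from this realization-dependent bound to the expectation-based bound $\max_k\mu_k - \min_k\mu_k$ appearing in \eqref{eqn:DeltBound}; the natural route is to recall that $\Delta_{\max}$ is itself defined inside the regret \eqref{eq: regret definition}, which is an expectation, so one takes expectations and invokes that for each fixed $k$, $\E[\tau_{\min}/\tau_{k,t}] = \mu_k$, together with a monotonicity/exchange argument ($\E[\max] \ge \max \E$, $\E[\min] \le \min \E$ must be used in the favorable direction) to conclude $\E[\max_k \tau_{\min}/\tau_{k,t} - \min_k \tau_{\min}/\tau_{k,t}] \le \max_k\mu_k - \min_k\mu_k$.

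The main obstacle is precisely this last exchange step: $\E[\max_k X_k] \ge \max_k \E[X_k]$ goes the \emph{wrong} way for an upper bound, so one cannot naively replace the realized max by $\max_k\mu_k$. I expect the resolution to hinge on how $\Delta_{\max}$ is actually being used downstream — i.e., it likely enters the regret bound multiplied by a count of ``bad'' rounds, and what is really needed is a bound on $\E[\sum_t (\text{latency gap})_t]$ rather than a pathwise bound. Alternatively, the intended reading of $\Delta_{\max}$ may already be as a bound on the \emph{expected} per-round gap, $\max_{t,\Pi}\E[r(\cG_t,t)-r(\mathcal{S}_t,t)]$, in which case the claim is a clean consequence of linearity of expectation applied termwise plus the two arguments above. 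I would therefore first pin down this definitional point, then present the three-term split, with the latency term handled in expectation via $\E[\tau_{\min}/\tau_{k,t}]=\mu_k$ and the other two via boundedness, and finally assemble \eqref{eqn:DeltBound}.
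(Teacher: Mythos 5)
Your three-term decomposition is exactly the argument the paper intends; the paper's own proof is a single sentence (``follows directly from the boundedness of $\Phi_{\rm g}$ and $\Phi_{\rm p}$ and the structure of \eqref{eq: reward func}''), so your handling of the $\alpha\Delta_{\Phi_{\rm g}}$ and $\gamma\Delta_{\Phi_{\rm p}}$ contributions matches it and is complete. The obstacle you flag on the latency term is genuine and is not addressed in the paper. Under the literal definition $\Delta_{\max}=\max_{t,\Pi}\, r(\cG_t,t)-r(\cS_t,t)$ with $r$ the \emph{realized} reward, the latency bracket is $\min_{k\in\cG_t}\tau_{\min}/\tau_{k,t}-\min_{k\in\cS_t}\tau_{\min}/\tau_{k,t}$, which is governed by the realized ratios in $(0,1]$ and can exceed $\max_k\mu_k-\min_k\mu_k$ on individual sample paths (it can approach $1$ even when all $\mu_k$ are nearly equal). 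Passing to expectations does not rescue the stated bound either: as you note, $\E[\max_k X_k]\ge\max_k\E[X_k]$ goes the wrong way, and symmetrically $\E[\min_{k\in\cS}X_k]\le\min_{k\in\cS}\mu_k$ gives an upper rather than the needed lower bound on the subtracted term, so even the expected gap need not sit below $\max_k\mu_k-\min_k\mu_k$. The reading under which the lemma is immediate --- and the one consistent with $\mu_k$ appearing in \eqref{eqn:DeltBound} at all --- is to measure the per-round gap through the Genie objective $C^{\cG}$, whose latency term is the deterministic $\min_{k\in\cS}\mu_k$; then $\min_{k\in\cG_t}\mu_k-\min_{k\in\cS_t}\mu_k\le\max_{k\in\bK}\mu_{k}-\min_{k\in\bK}\mu_{k}$ holds trivially and \eqref{eqn:DeltBound} follows from your split. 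You should state explicitly that you adopt this (or an equivalent expected-gap) definition of $\Delta_{\max}$; with that definitional fix your proof closes, and it is in fact more careful than the paper's.
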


\begin{proof}
Inequality \eqref{eqn:DeltBound} follows directly from the boundedness of $\Phi_{\rm g}$ and $\Phi_{\rm p}$ and the structure of \eqref{eq: reward func}.
\end{proof}
\color{black}
We bound the regret of \ac{pause} in the following theorem:
\begin{theorem}\label{THMREGRET}
The regret of \ac{pause} holds
\begin{equation}
        R^{\cP}(n) \leq K(\Delta_{\max} \!+\! \delta) \left(\frac{4(m\!+\! 1)\log(n)}{\delta^2} \!+\! 1\! +\! \frac{2\pi^2}{3}\right),
\end{equation}

\end{theorem}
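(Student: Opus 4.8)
The plan is to follow the standard UCB-style regret decomposition for combinatorial MAB, adapted to the fact that the non-latency terms $\Phi_{\rm g},\Phi_{\rm p}$ are history-dependent but \emph{deterministic} given the selection history, so all the stochasticity (and hence all the estimation error) is confined to the latency term $\min_{k\in\cS}\mu_k$ versus $\min_{k\in\cS}{\rm ucb}(k,t-1)$. First I would observe that the regret $R^{\cP}(n)$ can be split across rounds: in each round $t$, the reward gap $r(\cG_t,t)-r(\cP_t,t)$ is bounded by $\Delta_{\max}$, so it suffices to bound the expected number of rounds in which $\cP_t\neq\cG_t$ (more precisely, rounds in which the latency part of the chosen set is suboptimal). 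I would call such a round a ``bad'' round and aim to show the expected count of bad rounds is $O\!\left(\frac{K(m+1)\log n}{\delta^2}\right)$ plus a constant, then multiply by $\Delta_{\max}$; the extra additive $\delta$ and $\tfrac{2\pi^2}{3}$ come from slack terms absorbed along the way.

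The core step is the confidence-interval argument. Fix a user $k$. On the event that the empirical mean $\overline{\mu_k}(t)$ is within $\sqrt{\tfrac{(m+1)\log t}{T_k(t)}}$ of $\mu_k$ for all relevant $t$, a Hoeffding/Chernoff bound (the normalized latencies $\tau_{\min}/\tau_{k,t}\in(0,1]$ are bounded, hence sub-Gaussian) gives that the probability of violating this confidence band at any fixed time is at most $t^{-2(m+1)}$, and summing over the $\le t$ possible values of $T_k(t)$ and over all $t$ yields the $\tfrac{2\pi^2}{3}$ term (a union bound producing $2\sum_{t\ge1} t\cdot t^{-2(m+1)}\le 2\sum_{t\ge1}t^{-2}=\tfrac{\pi^2}{3}$, doubled for the two-sided band, over $K$ users — I'd track the constants carefully here). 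On the complementary ``clean'' event, whenever a user $k$ with latency-expectation gap at least $\delta$ from the optimal group is selected, its confidence radius must still be large, i.e. $\sqrt{\tfrac{(m+1)\log t}{T_k(t)}}\ge \delta/2$; this forces $T_k(t)\le \tfrac{4(m+1)\log n}{\delta^2}$. Summing this bound over the $K$ users gives the bracketed $\tfrac{4(m+1)\log n}{\delta^2}+1$ factor (the $+1$ absorbing the initial round each user is forced in, and ceiling effects).

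I would then assemble: expected bad rounds $\le K\!\left(\tfrac{4(m+1)\log n}{\delta^2}+1\right) + \tfrac{2\pi^2}{3}K$ (bounding the failure-of-confidence contribution), and each bad round costs at most $\Delta_{\max}$ in expected regret, while I'd use the $+\delta$ slack to cover the rounds where the selected group is ``close'' (gap between $0$ and $\delta$) — but since $\delta$ is defined as the \emph{minimal} nonzero gap, there are no such intermediate cases for the latency ordering, and the $+\delta$ is really a uniform additive cushion that lets one state the bound cleanly as $K(\Delta_{\max}+\delta)(\cdots)$. Here I need to be slightly careful about how the $\min$ over a set interacts with per-user gaps: the key sub-lemma is that if the chosen set $\cP_t$ has a strictly larger minimum-latency-expectation than $\cG_t$'s, then at least one user responsible for this must have been under-sampled in the above sense, which is exactly the combinatorial-MAB ``at least one bad arm'' argument of Chen et al.

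\textbf{Main obstacle.} The delicate point, and the step I expect to require the most care, is handling the interaction between the deterministic history-dependent terms $\Phi_{\rm g},\Phi_{\rm p}$ and the stochastic latency term inside a single $\argmax$ over sets: one must argue that whenever $\cP_t$ is chosen over $\cG_t$ despite being latency-suboptimal, the UCB inflation on some under-explored user accounts for the entire discrepancy, \emph{including} any $\Phi$-term advantage $\cG_t$ might have — this is why $\Delta_{\max}$ (which already bounds the combined $\Phi$ ranges via Lemma~\ref{lem:DeltBound}) appears as the per-round cost rather than just the latency range, and one must make sure no double-counting or sign error creeps in when the $\Phi$ terms are compared across the two sets. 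The rest is routine once the clean-event / sample-count bookkeeping is set up as above.
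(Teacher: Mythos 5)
Your overall architecture matches the paper's: bound the per-round gap by $\Delta_{\max}$, charge each disagreement round to an under-sampled user via counters, control the empirical means with a Hoeffding bound at level $t^{-2(m+1)}$, and force $T_k = O\bigl((m+1)\log n/\delta^2\bigr)$ for the users responsible. However, there is a genuine gap exactly at the point you flag as the ``main obstacle,'' and your proposed resolution would fail. The step ``whenever a user with latency-expectation gap at least $\delta$ from the optimal group is selected, its confidence radius must satisfy $\sqrt{(m+1)\log t/T_k(t)}\geq\delta/2$'' presupposes that the relevant gap is a per-user latency gap bounded below by $\delta$. But the quantity that actually has to exceed twice the confidence radius is the \emph{set-level} reward gap $C^{\cG}(\cG_t,t)-C^{\cG}(\cP_t,t)$, which mixes $\min_{k}\mu_k$ with the history-dependent $\Phi_{\rm g},\Phi_{\rm p}$ terms; since $\cG_t$ is the argmax, this gap is nonnegative but can be arbitrarily close to $0$ (the $\Phi$ terms can nearly cancel a latency disadvantage), so no uniform lower bound $\delta$ is available and the sample-count bound does not follow. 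Your reading of the $+\delta$ in the theorem as ``a uniform additive cushion that lets one state the bound cleanly'' is therefore backwards: in the paper it is the essential device that repairs this step. The paper compares PAUSE against an \emph{enhanced} Genie that collects an extra reward $\delta$ in every round with $r(\cG_t,t)\neq r(\cP_t,t)$; this simultaneously (i) lower-bounds the enhanced gap by $\delta$, so that choosing $l\geq\lceil 4(m+1)\log(n)/\delta^2\rceil$ makes the corresponding event impossible, and (ii) adds $\delta\sum_{k}\E[N_k(n)]$ to the regret, which is precisely where the factor $(\Delta_{\max}+\delta)$ comes from.

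A second, smaller omission: the paper needs an additional case split inside the ``gap'' event according to whether the ucb-minimizer $b_t$ of the Genie's set coincides with its true-$\mu$ minimizer $B_t$; the case $b_t\neq B_t$ is handled by a separate three-event lemma and two further Hoeffding applications, which is why each round contributes $4t^{-2(m+1)}$ (not $3t^{-2(m+1)}$) and hence where the constant $2\pi^2/3$ originates. Your accounting of the $\pi^2$ term lands on the right constant but by a different, loose bookkeeping. These points are fixable, but as written the central inequality of your argument is not justified.
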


\begin{proof}
The proof is given in  Appendix~\ref{app:Regret}. 
\end{proof}

\textcolor{NewColor2}{Theorem~\ref{THMREGRET} bounds the regret accumulated at every round $n$. The bound depends linearly on the ranges $\Delta_{\Phi_g}$ and $\Delta_{\Phi_p}$ of the functions $\Phi_g$ and $\Phi_p$ (through Lemma~\ref{lem:DeltBound}), which also allows comparing different formulations of these functions and their associated hyperparameters $(\alpha,\gamma,\rho)$. As is standard in \ac{mab} analysis, the main significance lies in the \emph{growth order} of regret with respect to the number of rounds $n$, rather than in its absolute scale. In the asymptotic regime, \ac{pause} achieves logarithmic regret, i.e., regret growth that does not exceed $\mathcal{O}(\log(n))$.}


\color{NewColor}
\subsection{Special Case: Averaged Generalization and Privacy  Terms}
\label{ssec:special}
The formulation of \ac{pause} in Subsection~\ref{ssec:PAUSE intro} is based on the reward function of \eqref{eq: reward func}, in which the dependencies on the generalization and privacy terms is given in the form of the generic bounded functions   $\Phi_{\rm g}$ and $\Phi_{\rm p}$. While this abstract formulation is amenable to regret analysis as detailed in Subsection~\ref{ssec:regret analysis}, implementing the policy via \eqref{eq: pause's policy} generally requires a computationally exhaustive brute search. However, there exist special cases in which the policy of \ac{pause} can be evaluated with affordable complexity.

To showcase this, we next consider the special case in which the  functions   $\Phi_{\rm g}$ and $\Phi_{\rm p}$ represent averaging, namely, 
\begin{subequations}
    \label{eqn:specialCase}
\begin{align}
    \Phi_{\rm g}\left(\{g_k(t-1)\}_{k\in \cS},  \textcolor{NewColor2}{\cS} \right) &= \frac{1}{m} \sum_{k \in \cS} g_k(t-1),  \\
    \Phi_{\rm p}\left(\{p_k(t-1)\}_{k\in \cS} ,  \textcolor{NewColor2}{\cS}\right) &= \frac{1}{m} \sum_{k \in \cS} p_k(t-1).     \label{eqn:PrivacyLin}   
\end{align}
\end{subequations}
We note that this case preserves the bounded requirement of $ \Phi_{\rm g}$ and $\Phi_{\rm p}$, as $g_k(t) \in [-1,1]$ and $p_k(t)\in[0,1]$ for every $k \in \bK$ and $t\in\bN$, and thus  $\Delta_{\Phi_{\rm g}} =2$ and $\Delta_{\Phi_{\rm p}} = 1$. 

For the special case given by \eqref{eqn:specialCase}, we propose an efficient algorithm for implementing  \eqref{eq: pause's policy} using a heap data structure. The proposed algorithm, termed {\em Pivot-and-Fill}, is summarized as Algorithm~\ref{alg: PivotFill}. There, we omit the round index $t$ from the variables (e.g., use $g_k$ and $p_k$ instead of $g_k(t-1)$ and $p_k(t-1)$) for brevity, and use {\em pop-min}  for popping the minimal element out of the heap, and {\em push} for inserting an element into the heap.

\begin{algorithm}
\caption{Pivot-and-Fill (round $t$)}
\label{alg: PivotFill}
\KwIn{candidate pool $\bK$, set size $m$,
weights $\alpha,\gamma$}
\KwOut{subset $S_t$ of size $m$}

Sort $\bK$ in \emph{descending} order of $\mathrm{ucb}$: $k_1,k_2,\dots,k_K$\;

Initialize a \emph{min-heap} $\mathcal H \leftarrow \{k_1,\dots,k_{m-1}\}$ keyed by $s(\ell)=\alpha g_\ell+\gamma p_\ell$ \;

$\displaystyle G_{\!H}\gets\!\!\!\sum_{\ell\in\mathcal H}\!g_\ell$, $\displaystyle P_{\!H}\gets\!\!\!\sum_{\ell\in\mathcal H}\!p_\ell$\;

$R^\star\gets -\infty$, 
$S^\star\gets\emptyset$\;

\BlankLine

\For{$i=m$ \KwTo $K$}{
    $k\gets k_i$ \tcp*{$k$ is the current pivot}

    $\displaystyle R_k \gets \mathrm{ucb}(k) + \frac{\alpha}{m}(G_{\cH} + g_k) + \frac{\gamma}{m}(P_{\cH}+ p_k)$\;

    \If{$R_k > R^*$}{
    $R^* \gets R_k$, $S^*\gets \cH \cup \{k\}$\;
    }
    
    \If{$\alpha g_k + \gamma p_k > \min\{\cH\}$}
    {
        $(g_\text{out},p_\text{out})\gets$ pop-min$(\mathcal H)$ 
        
        push $(k,\alpha g_k + \gamma p_k)$ into $\mathcal H$ 
        
        $G_{\!H}\gets G_{\!H}-g_\text{out}+g_k$, $P_{\!H}\gets P_{\!H}-p_\text{out}+p_k$\;
    }  
    
} 
$\displaystyle R_K \gets \mathrm{ucb}(k_K) + \frac{\alpha}{m}(G_{\cH} + g_{k_K}) + \frac{\gamma}{m}(P_{\cH}+ p_{k_K})$
\If{$R_K > R^*$}{
$R^* \gets R_K$, $S^*\gets \cH \cup \{k_K\}$\;
}
\Return $S_t\gets S^\star$

\end{algorithm}

\begin{proposition}
    \label{prop:Efficient}
    When $\Phi_{\rm g}$ and $\Phi_{\rm p}$ are given by \eqref{eqn:specialCase}, then Algorithm~\ref{alg: PivotFill} implements \ac{pause}'s policy  \eqref{eq: pause's policy}  with complexity order of $\mathcal{O}(K \log K)$.
\end{proposition}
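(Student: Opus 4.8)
The plan is to establish correctness and complexity separately. For correctness, I would first rewrite the optimization in \eqref{eq: pause's policy} under the averaging specialization \eqref{eqn:specialCase}: the objective to maximize over $\cS$ of size $m$ becomes $\min_{k\in\cS}\mathrm{ucb}(k) + \frac{\alpha}{m}\sum_{k\in\cS} g_k + \frac{\gamma}{m}\sum_{k\in\cS} p_k$. The key structural observation is that if we fix which user $k^\star\in\cS$ attains the minimum $\mathrm{ucb}$ value (the ``pivot''), then the remaining $m-1$ members must all have $\mathrm{ucb}$ at least $\mathrm{ucb}(k^\star)$, and subject to that constraint the objective is maximized by picking the $m-1$ users — among those with $\mathrm{ucb}\ge\mathrm{ucb}(k^\star)$ — with the largest values of $s(\ell)\triangleq\alpha g_\ell+\gamma p_\ell$, since the first term is then pinned at $\mathrm{ucb}(k^\star)$ and the rest is a separable sum. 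Thus an optimal solution is found among the $K$ candidate pivots, and for each pivot the best completion is the top-$(m-1)$ by $s(\cdot)$ drawn from the strictly-higher-ucb prefix.

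Next I would argue that Algorithm~\ref{alg: PivotFill} realizes exactly this enumeration. After sorting in descending $\mathrm{ucb}$ order as $k_1,\dots,k_K$, when the loop reaches pivot $k_i$, the eligible completion pool is precisely $\{k_1,\dots,k_{i-1}\}$ (all users with $\mathrm{ucb}$ at least $\mathrm{ucb}(k_i)$, ties broken consistently by the sort). I would prove by induction on $i$ the invariant that $\mathcal H$ holds the $m-1$ elements of $\{k_1,\dots,k_{i-1}\}$ with largest $s(\cdot)$, and that $G_H=\sum_{\ell\in\mathcal H} g_\ell$, $P_H=\sum_{\ell\in\mathcal H} p_\ell$: the base case $i=m$ is the explicit initialization with $\{k_1,\dots,k_{m-1}\}$, and the inductive step is the standard ``maintain top-$(m-1)$ with a min-heap'' update — after evaluating the candidate $R_{k_i}$, inserting $k_{i-1}$ into the pool means we compare $s(k_{i-1})$ against the heap minimum and swap if larger, which preserves the invariant for $i+1$ (note a small indexing care: the pivot $k_i$ is scored first and only \emph{then} pushed as a future completion candidate). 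Hence $R_{k_i}$ computed in the loop equals the true best objective value over all $m$-subsets whose min-ucb member is $k_i$, the final explicit block handles $k_K$ as pivot (whose completion pool is all earlier $m-1$ heap elements), and taking the running maximum $R^\star$ over all pivots returns a global optimizer $S^\star$.

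For the complexity claim, the sort is $\mathcal{O}(K\log K)$; initializing the heap of size $m-1$ is $\mathcal{O}(m)$ (or $\mathcal{O}(m\log m)$ with repeated pushes); the loop runs $K-m+1$ times, each iteration doing $\mathcal{O}(1)$ arithmetic for $R_{k_i}$ and at most one pop-min plus one push, i.e. $\mathcal{O}(\log m)$; so the loop is $\mathcal{O}(K\log m)$. Since $m\le K$, the total is $\mathcal{O}(K\log K)$, as claimed.

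The main obstacle I anticipate is the tie-handling subtlety in the ``fix the pivot'' argument: when several users share the same $\mathrm{ucb}$ value, the notion of ``the minimizer of $\mathrm{ucb}$ over $\cS$'' is not unique, so one must show that the chosen tie-break in the descending sort does not cause the algorithm to miss an optimal subset. I would resolve this by observing that for any optimal $\cS$, among its members with the smallest $\mathrm{ucb}$ value we may designate as pivot the one appearing \emph{last} in the sorted order; then every other member of $\cS$ appears strictly earlier in the sort and hence lies in the completion pool $\{k_1,\dots,k_{i-1}\}$ at the iteration where that pivot is processed, so the algorithm's top-$(m-1)$ choice attains an objective value at least that of $\cS$ — which suffices since $R^\star$ is only ever a maximum.
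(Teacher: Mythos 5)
Your proposal is correct and follows essentially the same route as the paper's proof: enumerate over the user $k'$ that determines the $\min_{k\in\cS}\mathrm{ucb}(k)$ term, observe that under \eqref{eqn:specialCase} the remaining objective is a separable sum so the best completion is the top-$(m-1)$ users by $\alpha g_\ell+\gamma p_\ell$ among those with higher-or-equal $\mathrm{ucb}$, maintain that set with a min-heap over the sorted prefix, and charge $\mathcal{O}(K\log K)$ for the sort plus $\mathcal{O}(\log m)$ per heap update. Your write-up is in fact more careful than the paper's (which gives only a sketch), notably in the explicit heap invariant by induction and the tie-breaking argument for non-unique minimizers of $\mathrm{ucb}$ within an optimal set.
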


\begin{proof}
   The complexity order $\mathcal{O}(K \log K)$. arises from sorting a group of size $K$, combined with $K$ iterations of heap operations, each requiring a runtime of order $\mathcal{O}(\log m)$. 
 
 The algorithm's correctness follows from the fact that for any solution of \eqref{eq: pause's policy} under \eqref{eqn:specialCase}, the $ucb$ term is determined by some user $k' \in \bK$. The optimized search in Algorithm~\ref{alg: PivotFill} runs over all options of $k' \in \bK$ and efficiently computes the maximal inner term for each such user. The heap incorporation maintains the $m-1$ users with the largest $\alpha g_k(t-1) + \gamma p_k(t-1)$ among all the users who have a higher or equal ${\rm ucb}$ compared to the $i$th suspected user in the loop. This avoids the re-sorting of lists for every element in the loop, thereby further alleviating the computational complexity. 
\end{proof}

The considered special case thus illustrates that in this particular setting, the formulation of \ac{pause} does not necessarily require computationally intensive brute search and therefore fulfills consideration~\ref{itm:complexity}.

\color{black}

\subsection{Discussion}
\label{ssec:discussion}
\ac{pause} is particularly designed to facilitate privacy and communication constrained \ac{fl}. It leverages \ac{mab}-based active user selection to dynamically cope with privacy leakage accumulation, without restricting the overall number of \ac{fl} rounds as in~\cite{zhang2024dynamic,fraboni2021clustered,chen2024personalized}. \ac{pause} is theoretically shown to achieve best-known regret growth, and it demonstrated promising results in our experiments as detailed in Section~\ref{sec:Experimental Study}.

The formulation of \ac{pause} in Algorithm~\ref{alg:Pause} focuses on the server operation, requiring the users only to send their updates with the proper \ac{ppn}. As such, it can be naturally combined with existing methods for alleviating latency and privacy via update encoding~\cite{gafni2022federated}.  Moreover, the statement of Algorithm~\ref{alg:Pause} complies with any {\em privacy policy} imposed, while adhering to the constraints \ref{itm:gen}-\ref{itm:privacy}. This inherent adaptability makes it an agile solution across diverse policy frameworks.

\textcolor{NewColor}{We note that our latency model assumes independent per-user communication delays drawn from general (user-specific) distributions $\tau_k$. This abstraction provides analytical tractability while capturing user heterogeneity. In practice, however, more complex phenomena such as network congestion, correlated failures, or straggling behavior may arise. These challenges are often addressed in \ac{fl} via deadline-based synchronous schemes~\cite{bonawitz2019towards} or asynchronous FL frameworks~\cite{xie2019asynchronous}, which introduce additional considerations such as model staleness~\cite{ortega2023asynchronous} and partial aggregation~\cite{lang2024stragglers}. While our user selection methodology could potentially be adapted to operate in conjunction with such mechanisms, this extension involves non-trivial modifications and is thus left for future work.}

\textcolor{NewColor}{
The \ac{pause} policy outlined in~\eqref{eq: pause's policy} incorporates two critical hyper-parameters, $\alpha$ and $\gamma$, which emerge from the reward function specified in~\eqref{eq: reward func}. These parameters exert direct influence on user selection due to their additive structure within the reward formulation. Increasing their values drives \ac{pause} toward selecting users who have been relatively underutilized in previous rounds.
The interplay between these parameters reveals distinct optimization priorities, which also depend on the functions $\Phi_{\rm g}$ and $\Phi_{\rm p}$. For instance, under the averaging-based setting in \eqref{eqn:specialCase},  the generalization term accounts for varying data sizes across users, while the privacy term operates independently of dataset magnitude. Consequently, elevating $\alpha$ steers the algorithm toward users who promise maximum learning contribution, potentially at the expense of privacy guarantees. Conversely, increasing $\gamma$ prioritizes privacy-preserving user selection, which inherently introduces additional noise into model updates irrespective of individual user data volumes.
Our empirical evaluation in Section~\ref{sec:Experimental Study} employed case-specific parameter tuning. This manual calibration process highlights an avenue for future research: developing automated hyper-parameter optimization strategies tailored to specific system characteristics and requirements. We leave this study for subsequent investigation.
}

A core challenge associated with applying \ac{pause} in its generic form stems from the fact that~\eqref{eq: pause's policy} involves a brute search over $\binom{K}{m}$ options. Such computation is expected to become infeasible at large networks, i.e., as $K$ grows, making it incompatible with consideration \ref{itm:complexity}. This complexity can be alleviated by approximating the brute search with low-complexity policies based on~\eqref{eq: pause's policy}. \textcolor{NewColor}{Various methods can be considered for tackling the general reward in \eqref{eq: pause's policy} with reduced complexity via heuristic and greedy methods. In the following section, we adopt a method based on \ac{sa}, motivated by the relative simplicity of \ac{sa} and its strong theoretical foundations~\cite{hajek1988cooling}}.


\section{SA-PAUSE}
\label{sec:ALSA}
In this section, we alleviate the computational burden associated with the brute search operation of \ac{pause} \textcolor{NewColor}{in its general formulation as in \eqref{eq: pause's policy}}. The resulting algorithm, termed SA-\ac{pause}, is based on \ac{sa} principles, as detailed in Subsection~\ref{ssec:Annealing}. We analyze SA-\ac{pause}, rigorously identifying conditions for which it coincides with \ac{pause} and characterize its time complexity in Subsection~\ref{ssec:tailored sa Analysis}.

\subsection{Simulated Annealing Algorithm}
\label{ssec:Annealing}
To ease the computational efficiency of the search procedure in~\eqref{eq: pause's policy}, we construct a graph structure where the set of vertices $\bV$ comprises all possible subsets of $m$ users in $\bK$. For each vertex (i.e., set of users) $\mySet{V} \in \bV$, we denote its neighboring set as $\mySet{N}_\mySet{V}$. Two vertices $\mySet{V},\mySet{U} \in \bV$ are designated as neighbors when they satisfy the following requirements:
\begin{enumerate}[label={R\arabic*}:]

\item \label{itm: sa-pause neighboring conditions} The intersection of the vertices contains exactly $m-1$ elements, i.e., the sets of users $\mySet{V}$ and $\mySet{U}$ differ in a single user, thus $|\mySet{V}\cap \mySet{U}|=m-1$.
\textcolor{NewColor}{
\item\label{itm: neighboring condition} One of the users that appears in only a single set minimizes the ${\rm ucb}$ in its designated group. i.e., one of the sets is an {\em active neighbor} of the other. Mathematically, we say that $\mySet{U}$ is an {\em active neighbor} of $\mySet{V}$ (and $\mySet{V}$ is a {\em passive neighbor} of $\mySet{U}$) if the distinct node in $\mySet{V}$, i.e., $k=\mySet{V} \setminus \mySet{U}$, holds 
\begin{align*}
    k = \argmin\limits_{k' \in \mySet{V}} {\rm ucb}(k',t-1).
\end{align*}
}
\end{enumerate}

The above graph construction is inherently undirected due to the symmetric nature of the neighbor relationships.

To formalize our optimization objective, we define the energy of each vertex as the quantity we seek to maximize in \ac{pause}'s search \eqref{eq: pause's policy}. Specifically, for any vertex $\mySet{V}$, define
\textcolor{NewColor}{
\begin{align}
    E(\mySet{V}) \triangleq &\min_{k \in \mySet{V}} {\rm ucb}(k,t-1) + \!\alpha\cdot \Phi_{\rm g}\left(\{g_k(t-1)\}_{k\in \mySet{V}},  \textcolor{NewColor2}{\mySet{V}}\right) \notag
    \\ &+ \gamma \cdot \Phi_{\rm p}(\{p_k(t-1)\}_{k \in \mySet{V}},  \textcolor{NewColor2}{\mySet{V}})). 
    \label{eqn:energy}
\end{align}
}
To identify a vertex exhibiting maximal energy, we introduce an optimized \ac{sa}-based algorithm \cite{hajek1988cooling}, which iteratively inspects vertices (i.e., candidate user sets) in the graph. The resulting procedure, detailed in Algorithm~\ref{alg:tailored-SA}, is comprised of two stages taking place on \ac{fl} round $t$: {\em initialization} and {\em iterative search}.

\smallskip
{\bf Initialization:} 
 Following established \ac{sa} methodology, we maintain an auxiliary temperature sequence, whose $j$th entry is defined as $ \tau_j = \frac{C}{\log(j+1)}$, where parameter $C>0$ exceeds the maximum energy difference between any pair of vertices in the graph. Thus, one must first set the value of $C$. 
\textcolor{NewColor}{
Accordingly, the initialization phase at round $t$ involves sorting all $K$ users according to their respective ${\rm ucb}(k,t-1)$. This is used first to determine an appropriate value for $C$. Denoting the user with the $m$th biggest $\rm{ucb}$ as $k_m$, and following the $\Phi_g$'s and $\Phi_p$'s ranges presented in lemma \ref{lem:DeltBound}, the parameter $C$ is established as follows, where $\omega$ represents a small positive constant:
\begin{align}
    C = & {\rm ucb(k_m, t-1)} - \min_{k' \in \bK} {\rm ucb(k',t-1)} \notag \\
    & + \alpha \cdot \Delta_{\Phi_g} + \gamma \cdot \Delta_{\Phi_p}
    . 
    \label{eq: setting C}
\end{align}
}

\smallskip
{\bf Iterative Search:} 
The algorithm's iterative phase updates an inspected vertex, moving at iteration $j$ from the previously inspected $\mySet{V}_j$ into an updated $\mySet{V}_{j+1}$. This  necessitates the identification of $\mySet{N}_{\mySet{V}_j}$. We decompose this task into the discovery of active and passive neighbors as specified in~\ref{itm: neighboring condition}, utilizing the previously constructed sorted list:
\textcolor{NewColor}{
\begin{enumerate}[label={N\arabic*}:]
\item \textbf{Active Neighbor Identification} - To determine the active neighbors in iteration $i$,
we substitute the user with the minimal ${\rm ucb}(k,t-1)$ in $\mySet{V}_j$ by a user that isn't in the mentioned set. This procedure yields at most $K-m$ active neighbors of $\mySet{V}$.\label{itm: finding active neighbors}
\item \textbf{Passive Neighbor Identification} - For passive neighbors, we establish that a vertex $\mySet{U}$ qualifies as a passive neighbor of $\mySet{V}_j$ if it can be constructed through one of two mechanisms. Let $a$ denote the user with minimal ${\rm ucb}(k,t-1)$ in $\mySet{V}_j$ and $b$ represent the user with the second-minimal value. $\mySet{U}$ is a passive neighbor of $\mySet{V}_j$ if it is obtained by either:\label{itm: finding passive neighbors}
\begin{enumerate}
\item Replace any user in $\mySet{V}_j$ except $a$ with a user whose ${\rm ucb}(k,t-1)$ value is lower than $a$'s (positioned before $a$ in the sorted list).
\item Replace $a$ with a user whose ${\rm ucb}(k,t-1)$ value is lower than $b$'s (positioned before $b$ in the sorted list).
\end{enumerate}
\end{enumerate}
}

Once the neighbors set $\mySet{N}_{\mySet{V}_j}$ is formulated, the algorithm inspects a random neighbor $\mySet{U}$. This set is inspected in the following iteration if it improves in terms of the energy \eqref{eqn:energy} (for which it is also saved as the best set explored so far), or alternatively it is randomly selected with probability $ \exp{\big(-\frac{E(\mySet{U}) - E(\mySet{V}_j)}{\tau_j}\big)}$. The resulting procedure is summarized as Algorithm~\ref{alg:tailored-SA}.

\begin{algorithm}
    \caption{Tailored \ac{sa} for \ac{pause} at round $t$}
    \label{alg:tailored-SA}
    \SetKwInOut{Input}{Input} 
    \Input{Set of users $\bK$;  Number of active users $m$}   
    \SetKwInOut{Initialization}{Init}
    \Initialization{Randomly sample a vertex $\mySet{V}_1$ and set $\mySet{P}_t = \mySet{V}_1$; \newline Sort the users along ${\rm ucb}(k,t-1)$.}
    {
    Compute $C$ via \eqref{eq: setting C}\;
        \For{$j= 1,2 \ldots$}{%
                Find $N_{\mySet{V}_j}$ as described in \ref{itm: finding active neighbors} and \ref{itm: finding passive neighbors};

                    Sample randomly $\mySet{U} \in \mySet{N}_{\mySet{V}_j}$;

                    \If{$E(\mySet{U}) \geq E(\mySet{V}_j)$}{
                    Update inspected vertex $\mySet{V}_{j+1} \gets \mySet{U}$\;
                    Update best vertex $\mySet{P}_t \gets \mySet{U}$\;
                    }
                    \Else{Sample $p$ uniformly over $[0,1]$\;
                    Set $\tau_j = \frac{C}{\log(1+j)}$\;                    
                    \If{$p \leq \exp{\big(-\frac{E(\mySet{U}) - E(\mySet{V}_j)}{\tau_j}\big)}$}{  Update inspected vertex $\mySet{V}_{j+1} \gets \mySet{U}$\;}
                    \Else{Re-inspect vertex $\mySet{V}_{j+1} \gets \mySet{V}_j$\;}

  }
  }
  \KwRet{$\mySet{P}_t$}
  }
\end{algorithm}

The proposed SA-\ac{pause} implements its \ac{fl} procedure with active user selection formulated, while using Algorithm~\ref{alg:tailored-SA} to approximate \ac{pause}'s search~\ref{eq: pause's policy}. SA-\ac{pause} thus realizes Algorithm~\ref{alg:Pause} while replacing its Step~\ref{stp:UserSel} with Algorithm~\ref{alg:tailored-SA}.

\subsection{Theoretical Analysis}
\label{ssec:tailored sa Analysis}
{\bf Optimality:}
The \ac{sa} search of \ac{sa}-\ac{pause}, detailed in Algorithm~\ref{alg:tailored-SA}, replaces searching over all possible user selections with exploration over a graph. To show its validity, we first prove that it indeed finds the reward-maximizing set of users, as done in \ac{pause}. Since in general there may be more than one set of users that maximizes the reward (or equivalently, the energy  \eqref{eqn:energy}), we use $\cJ$ to denote the set of vertices exhibiting maximal energy in the graph. 
The ability of Algorithm~\ref{alg:tailored-SA} to recover the same users set as brute search via \eqref{eq: pause's policy} (or one that is equivalent in terms of reward) is stated in the following theorem:
 \begin{theorem}\label{THMSA}
For Algorithm~\ref{alg:tailored-SA}, it holds that:
     \begin{equation}
     \lim_{j \rightarrow \infty} \bP(\mySet{V}_j \in \cJ) = 1.
    \end{equation}
 \end{theorem}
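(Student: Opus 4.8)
The plan is to invoke Hajek's classical convergence theorem for simulated annealing~\cite{hajek1988cooling}, which guarantees that an SA chain on a finite graph converges in probability to the set of global maxima provided two conditions hold: (i) the graph (with the active/passive neighbor relation) is connected and ``weakly reversible'' in the appropriate sense, so that from any vertex one can reach a global maximizer along a path whose energy never drops below a bounded depth; and (ii) the cooling schedule $\tau_j = C/\log(1+j)$ has constant $C$ at least as large as the maximal ``depth'' of any local-but-not-global maximum --- which is upper bounded by the maximal energy gap between any two vertices of the graph. Our job is therefore to verify these structural hypotheses for the specific graph and energy function defined in \eqref{eqn:energy}, and to check that the choice of $C$ in \eqref{eq: setting C} meets the temperature requirement.

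The key steps, in order, would be: First, establish that the energy gap between any two vertices is bounded by the right-hand side of \eqref{eq: setting C}; this follows because $\min_{k\in\mySet{V}}{\rm ucb}(k,t-1)$ ranges over the interval $[\min_{k'}{\rm ucb}(k',t-1), {\rm ucb}(k_m,t-1)]$ (the upper end being attained when $\mySet{V}$ consists of the $m$ highest-ucb users), while the $\Phi_{\rm g}$ and $\Phi_{\rm p}$ contributions vary by at most $\alpha\Delta_{\Phi_{\rm g}}$ and $\gamma\Delta_{\Phi_{\rm p}}$ respectively (using the ranges from Lemma~\ref{lem:DeltBound}); hence $C$ as set in \eqref{eq: setting C} satisfies $C \geq \max_{\mySet{V},\mySet{U}}|E(\mySet{V})-E(\mySet{U})|$, fulfilling Hajek's temperature condition. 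Second, prove graph connectivity: given any two vertices $\mySet{V},\mySet{U}$, one can transform $\mySet{V}$ into $\mySet{U}$ by a sequence of single-user swaps, and at each step the swap can be realized as either an active or a passive move --- intuitively, one repeatedly removes the min-ucb user of the current set (an active move) or, when the user to be removed is not the minimizer, performs a passive move; a careful argument shows such a path always exists. Third, verify weak reversibility / the ``reachability at bounded height'' property that Hajek requires: from any vertex there is a path to some $\mySet{V}^\star\in\cJ$ along which the energy stays above $E(\mySet{V}) - C$ (trivially true since $C$ bounds all energy gaps), and symmetrically global maxima cannot be ``escaped downhill'' past depth $C$. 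Once these are in place, Hajek's theorem yields $\lim_{j\to\infty}\bP(\mySet{V}_j\in\cJ)=1$ directly; since $\mySet{P}_t$ tracks the best vertex seen, the same limit holds (indeed $\bP(\mySet{P}_t\in\cJ)\to 1$ as well).

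\textbf{Main obstacle.} The delicate part is not the temperature bound (which is essentially bookkeeping with Lemma~\ref{lem:DeltBound}) but verifying that the \emph{restricted} neighbor structure --- where only active/passive neighbors as defined in \ref{itm: neighboring condition} are connected, rather than all single-swap neighbors --- still yields a connected, weakly reversible graph. One must argue that the min-ucb-anchored swap rule does not fragment the state space: that from any user set one can still route to any other, and in particular to a global maximizer, using only admissible moves. This requires a combinatorial argument exploiting the total order on users induced by their ${\rm ucb}$ values --- roughly, showing that by alternately bringing in high-ucb users and evicting the current minimizer one can monotonically ``rotate'' any set toward any target set without getting stuck. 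I would also need to confirm that the SA acceptance probabilities are well-defined on this graph (e.g., that every vertex has a nonempty neighbor set, so the chain never stalls), and that the irreducibility holds at every temperature, not just in the limit. Handling the edge cases --- ties in ${\rm ucb}$ values, and the boundary iteration $i=K$ in Algorithm~\ref{alg: PivotFill}'s analogue within the SA loop --- is where most of the technical care will go.
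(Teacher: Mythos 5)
Your proposal is correct and follows essentially the same route as the paper: invoke Hajek's conditions (weak reversibility, the $C/\log(1+j)$ schedule with $C$ dominating the maximal energy gap as guaranteed by \eqref{eq: setting C}, and irreducibility), with the only substantive work being connectivity of the restricted active/passive neighbor graph. The combinatorial argument you flag as the main obstacle is exactly what the paper's appendix supplies, via an explicit two-phase path construction that first performs active swaps (evicting the current min-${\rm ucb}$ user for target users of higher ${\rm ucb}$) and then passive swaps for the remaining target users.
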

 
\begin{proof}
The proof is given in  Appendix~\ref{app:sa convergence}. 
\end{proof}

 Theorem~\ref{THMSA} shows that Algorithm~\ref{alg:tailored-SA} is guaranteed to recover the reward-maximizing users set in the horizon of an infinite number of iterations. 
 While the \ac{sa} algorithm operates over a finite number of iterations, and Theorem \ref{THMSA} applies as $j \rightarrow \infty$, the carefully designed cooling temperature sequence and algorithmic structure ensure robust practical performance of \ac{sa} algorithms \cite{henderson2003theory, ledesma2008practical}. This efficacy is empirically validated in Section~\ref{sec:Experimental Study}.
 
\smallskip
\noindent
{\bf Time-Complexity:}
Having shown that Algorithm~\ref{alg:tailored-SA} can approach the users' set recovered via \ac{pause}, we next show that it satisfies its core motivation, i.e., carry out this computation with reduced complexity, and thus supports scalability. While inherently the number of selected users $m$ is smaller than the overall number of users $K$, and often $m \ll K$, we accommodate in our analysis computationally intensive settings where $m$ is allowed to grow with $K$, but in the order of $m=\Theta(K)$.

On each \ac{fl} round $t$, the initialization phase requires $\mySet{O}(K\log K)$ operations due to the list sorting procedures. During each iteration $j$, locating $\mySet{V}_j$'s users' indices in the sorted lists can be accomplished in $\mySet{O}(K\log K)$ operations through pointer manipulation. The identification of $\mySet{N}_{\mySet{V}_j}$ exhibits complexity $\mySet{O}(|\mySet{N}_{\mySet{V}_j}|)$, as each neighbor can be found in constant time. While the number of active neighbors is bounded by $K-m$, the quantity of passive neighbors varies across users and iterations.
Given that each passive neighbor of $\mySet{V}_j$ corresponds to that node being an active neighbor of $\mySet{V}_j$, and considering the bounded number of active neighbors per user, a balanced graph typically exhibits approximately $K-m$ passive neighbors per user. Specifically, in the average case where each user in $\bV$ has $\mySet{O}(K\log K)$ passive neighbors, the complexity order of Algorithm~\ref{alg:tailored-SA} is  $\mySet{O}(K\log K)$.

For comparative purposes, consider a simplified \ac{sa}  variant (termed {\em Vanilla-SA}) where the neighboring criterion is reduced to only the first condition in \ref{itm: sa-pause neighboring conditions} (i.e., nodes are neighbors if they share exactly $m-1$ users). This algorithm closely resembles Algorithm~\ref{alg:tailored-SA}, but eliminates list sorting and determines $N_{\mySet{V}_j}$ by exhaustively replacing each user in $\mySet{V}_j$ with each user in $\bK \setminus \{\mySet{V}_j\}$.
In this case, by setting $C$ to be an upper bound on $\Delta_{max}$~\eqref{eqn:DeltBound}, e.g., $C \triangleq 2\alpha + \gamma +1$ we satisfy the conditions for Theorem~\ref{THMSA} as well, ensuring asymptotic convergence. However, this approach results in $|N_{\mySet{V}_j}| = m(K-m)$, producing a densely connected graph that impedes search efficiency and invariably yields $\mySet{O}(K^2)$ complexity. Table \ref{tab: complexity} presents a comprehensive comparison of time complexities across different scenarios.

\begin{table}
   \centering
   \begin{tabular}{|c|c|c|c|}
       \hline
       \renewcommand{\arraystretch}{1.5}  
       \diagbox{Algorithm}{Case} & Best & Average & Worst \\
       \hline
       Brute force search \ref{eq: pause's policy} & \multicolumn{3}{c|}{$O(e^K)$} \\
       \hline
       Vanilla-SA  & \multicolumn{3}{c|}{$O(K^2)$} \\
       \hline
       Algorithm~\ref{alg:tailored-SA} & \multicolumn{2}{c|}{$O(K\log K)$} & $O(K^2)$ \\
       \hline
   \end{tabular}
   \vspace{0.2cm}
   \caption{Time complexity comparison of different algorithms}
   \label{tab: complexity}
   \vspace{-0.6cm}
\end{table}

\smallskip
\noindent
{\bf Summary:}
Combining the optimality analysis in Theorem~\ref{THMSA} with the complexity characterization in Table~\ref{tab: complexity} indicates that the integration of Algorithm~\ref{alg:tailored-SA} to approximate \ac{pause}'s search~\eqref{eq: pause's policy} into SA-PAUSE enables the application of \ac{pause} to large-scale networks, meeting \ref{itm:complexity}. The theoretical convergence guarantees, coupled with its practical efficiency, make it a robust solution for approximating \ac{pause} and thus still adhering to considerations \ref{itm:gen}-\ref{itm:privacy}. The empirical validation of these theoretical results is presented comprehensively in the following section.


\section{Numerical Study}
\label{sec:Experimental Study}
\subsection{Experimental Setup}
\label{ssec:setup}
Here, we numerically evaluate \ac{pause} in \ac{fl}\footnote{The source code used in our experimental study, including all the hyper-parameters, is available online at \url{https://github.com/oritalp/PAUSE}}.  
We consider the training of a \ac{dnn}  for image classification based on MNIST and CIFAR-10, \textcolor{NewColor}{which are widely-used for empirical evaluation of client selection in \ac{fl}, representing non-trivial tasks that can be tackled with different \ac{dnn} architectures and learning frameworks~\cite{mayhoub2024review}}.
\textcolor{NewColor}{
We train three different \ac{dnn} architectures: 
$(i)$ a  three-layer \ac{fc} network with 32 neurons at its widest layer for MNIST; 
$(ii)$ a \ac{cnn} with three hidden layers followed by a \ac{fc} network with two hidden layers for CIFAR-10; and 
$(iii)$  a larger \ac{cnn} with five hidden layers followed by a \ac{fc} network with two hidden layers for CIFAR-10 under the {\em large network} setting in Subsection~\ref{ssec:LargeNet}.}

We examine our approach 
in both small and large network settings with varying privacy budgets. In the former, the data is divided between $K=30$ users, and $m=5$ of them are chosen at each round, while the latter corresponds to $K=300$ and $m=15$ users. The communication latency  $\tau_k$ obeys a normal distribution for every $k \in \bK$. The users are equally divided into two groups: fast users, who had lower communication latency expectations, and slower users.
For each configuration, we test our approach both in i.i.d. and non-i.i.d. data distributions. In the imbalanced case, the data quantities are sampled from a Dirichlet distribution with parameter $\bm{\alpha}$, where each user exhibits a dominant label comprising approximately a quarter of the data.

\textcolor{NewColor2}{
We focus in this study on two different reward formulations:
\begin{enumerate}[label={R\arabic*}]
    \item \label{itm:average} A loss formulation which takes the form as in \eqref{eqn:specialCase}. Evaluating our framework for this case allows us to assess \ac{pause} using Algorithm~\ref{alg: PivotFill} and to compare SA-PAUSE to the exact solution in the large network case as well.
    \item \label{itm:Gen} A non-separable reward representing a network access constraint. Here, the users are randomly assigned into a set of $R$ distinct clusters $\{\mySet{C}_r\}_{r=1}^R$, and the generalization term encourages selecting users from different clusters, i.e., 
    \begin{align}
        & \Phi_{\rm g}\left(\{g_k(t)\}_{k\in \cS}, \cS\right)  = \frac{1}{m} \sum_{k \in \cS} g_k(t) \notag \\
      & \qquad \qquad - \rho \sum_{r=1}^R \max\left(0, \left|\cS \cap \mySet{C}_r\right| - 1\right),
         \label{eqn:GenLoss}
    \end{align}
    while the privacy term is set via \eqref{eqn:PrivacyLin}.
    To capture the network access consideration in the performance, we add to the reported per-round latency an additional penalty of $\delta_{\tau} \cdot \sum_{r=1}^R \max\left(0, \left|\cS \cap \mySet{C}_r\right| - 1\right)$. For the small network we set  $R=6$ and $\delta_{\tau}= 0.05$, while for the large network we use   $R=20$ and $\delta_{\tau}= 0.01$. In the latter, \ac{pause} becomes computationally infeasible for large networks, and one must resort to SA-PAUSE.
\end{enumerate} 
}
 Our algorithms are compared with the following benchmarks: 
 \begin{itemize}[leftmargin=*]
     \item {\em Random}, uniformly sampling $m=5$ users without replacements~\cite{li2019convergence}, solely in the i.i.d balanced case.
\item {\em \ac{fa} with privacy} and {\em FedAvg w.o. privacy}, choosing all $K$ users, with and without privacy, respectively.
\item  {\em Fastest in expectation}, using only the same pre-known five fastest $m$ users in expectation at each round. 
\item The {\em clustered sampling}  selection algorithm proposed in \cite{fraboni2021clustered}.
 \end{itemize}

\color{NewColor2}
\subsection{Small Network with i.i.d. Data}
Our first study trains the mentioned \ac{cnn} \textcolor{NewColor}{with 3 hidden layers} using an overall privacy budget of $\epsb=40$ for image classification using the CIFAR-10 dataset. The resulting \ac{fl} accuracies versus communication latency are illustrated in Fig.~\ref{fig: CIFAR-10 30 choose 5 i.i.d data, val acc vs. time} under \ref{itm:average} and in Fig.~\ref{fig: rev2_constrained_small_acc} for \ref{itm:Gen}. The error curves were smoothened with an averaging window of size $10$ to attenuate the fluctuations. As expected, due to privacy leakage accumulation, the more rounds a user participates in, the noisier their updates are. This is evident in both Figs.~\ref{fig: CIFAR-10 30 choose 5 i.i.d data, val acc vs. time}-\ref{fig: rev2_constrained_small_acc}, where choosing all users quickly results in ineffective updates. 
\ac{pause} consistently achieves both accurate learning and rapid convergence. Further observing this figure indicates SA-PAUSE successfully approximates \ac{pause}'s brute force search as well.

\begin{figure}
    \centering
    \includegraphics[width=1\linewidth, height=5.5cm]{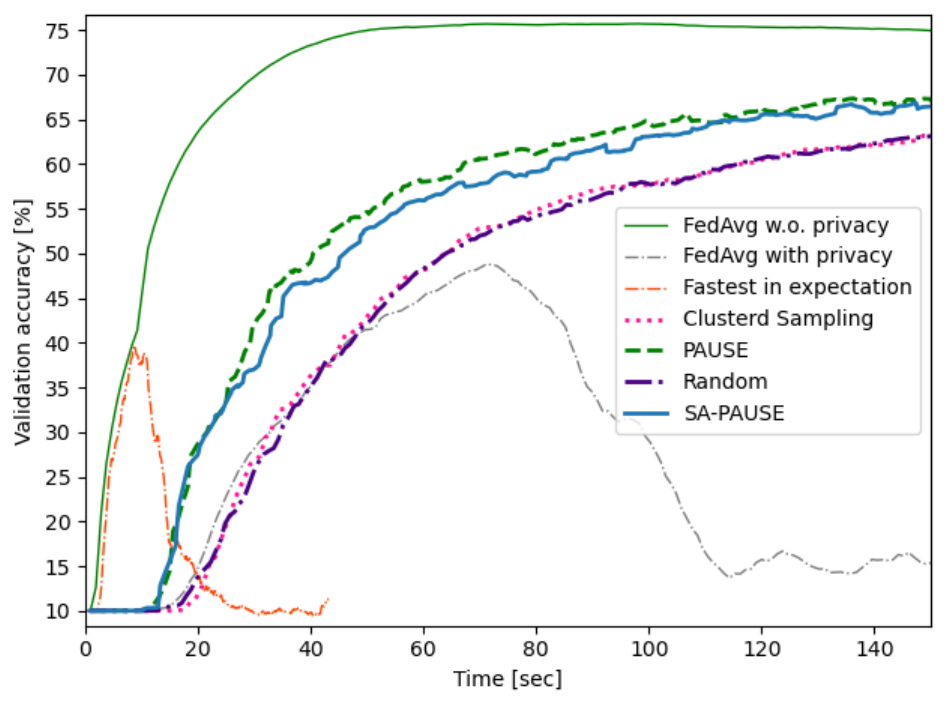}
    \vspace{-0.6cm}
    \caption{Validation accuracy vs. latency, 3 layers CNN trained on CIFAR-10, i.i.d. data, small network, reward \ref{itm:average}}
    \label{fig: CIFAR-10 30 choose 5 i.i.d data, val acc vs. time}
    \vspace{-0.4cm}
\end{figure}
\begin{figure}
    \centering
    \includegraphics[width=1\linewidth, height=5.5cm]{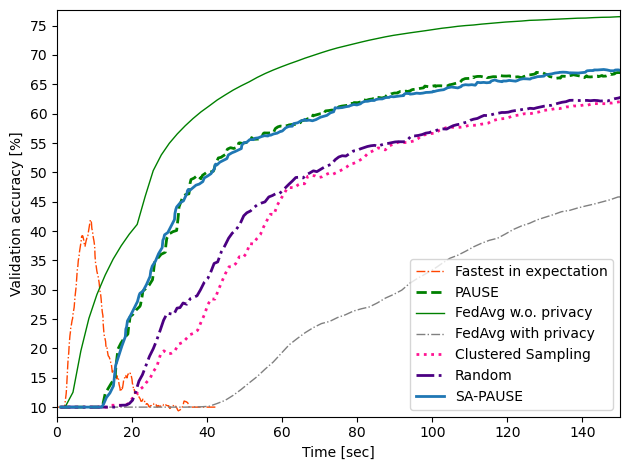}
    \vspace{-0.6cm}
    \caption{Validation accuracy vs. latency, 3 layers CNN trained on CIFAR-10, i.i.d. data, small network, reward \ref{itm:Gen}}
    \label{fig: rev2_constrained_small_acc}
    \vspace{-0.2cm}
\end{figure}

\ac{pause}'s ability to mitigate privacy accumulation is showcased in Figs.~\ref{fig: CIFAR iid 30 choose 5 privacy vs. epochs}-\ref{fig: rev2_constrained_small_privacy}. There, we report the overall leakage as it evolves over epochs under \ref{itm:average}-\ref{itm:Gen}, respectively. 
Fig.~\ref{fig: CIFAR iid 30 choose 5 privacy vs. epochs} reveals that the privacy violation at each given epoch using \ac{pause} is lower compared to the random and the clustered sampling methods, adding to its improved accuracy and latency noted in Fig.~\ref{fig: CIFAR-10 30 choose 5 i.i.d data, val acc vs. time}. 
Comparing Fig.~\ref{fig: rev2_constrained_small_privacy} with Fig.~\ref{fig: CIFAR iid 30 choose 5 privacy vs. epochs}, one could spot that the incorporation of the additional reward consideration in \eqref{eqn:GenLoss} leads to mild decrease in the privacy leakage management of \ac{pause} and its approximation, though yet being superior to the compared algorithms.
Note that {\em FedAvg with privacy} and {\em fastest in expectation} methods' maximum privacy violation coincide, as in every round it is raised by an $\epsilon_i$.

\begin{figure}[t]
    \centering
    \includegraphics[width=1\linewidth, height=5.5cm]{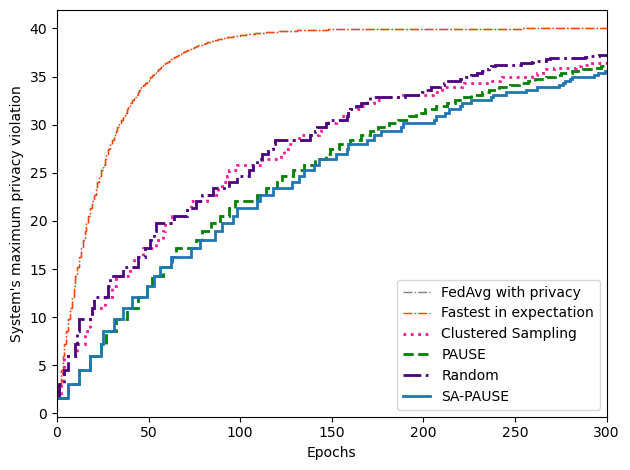}
       \vspace{-0.6cm}
    \caption{Privacy leakage vs. global epochs, 3 layers CNN trained on CIFAR-10. i.i.d. data,  small network, reward \ref{itm:average}}
    \label{fig: CIFAR iid 30 choose 5 privacy vs. epochs}
    \vspace{-0.4cm}
\end{figure}

\begin{figure}
    \centering
    \includegraphics[width=1\linewidth, height=5.5cm]{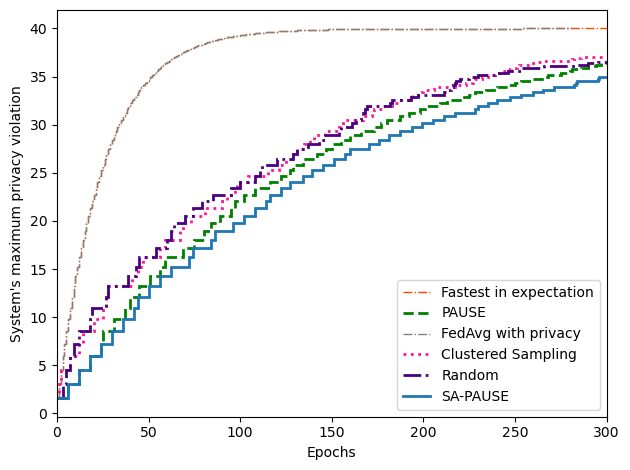}
    \vspace{-0.6cm}
    \caption{Privacy leakage vs. global epochs, 3 layers CNN trained on CIFAR-10, i.i.d. data, small network, reward reward \ref{itm:Gen}}
    \label{fig: rev2_constrained_small_privacy}
    \vspace{-0.2cm}
\end{figure}

\color{black}
\subsection{Small Network with non-i.i.d. Data}
\label{ssec:SmalNoniid}
Subsequently, we train the same \ac{dnn} with CIFAR-10 in the non-i.i.d case as described previously with an overall privacy budget of $\epsb=100$ \textcolor{NewColor2}{under the reward in \ref{itm:average}}. As opposed to the balanced data test, this setting necessitates balancing between users with varying quantities of data, which might contribute differently to the learning process. The data quantities were sampled from a Dirichlet distribution with parameter ${\bm\alpha} = {\bm 3}$.
Analyzing the validation accuracy versus communication latency in Fig.~\ref{fig: CIFAR_small_non_iid_val_acc} indicates the superiority of our algorithms also in this case in terms of accuracy and latency. 
Fig.~\ref{fig: CIFAR non-iid 30 choose 5 privacy vs. time} depicts the maximum privacy violation of the system, this time, versus the communication latency, and facilitates this statement by demonstrating both \ac{pause} and its approximation's ability to maintain privacy better, although performing more sever client iterations in any given time. \textcolor{NewColor}{As in the preceding studies, we consistently observe the ability of the \ac{sa}-based algorithm to approach the direct computation of \ac{pause} via \eqref{eq: pause's policy}.} 

\begin{figure}[t]
    \centering
    \includegraphics[width=1\linewidth, height=5.5cm]{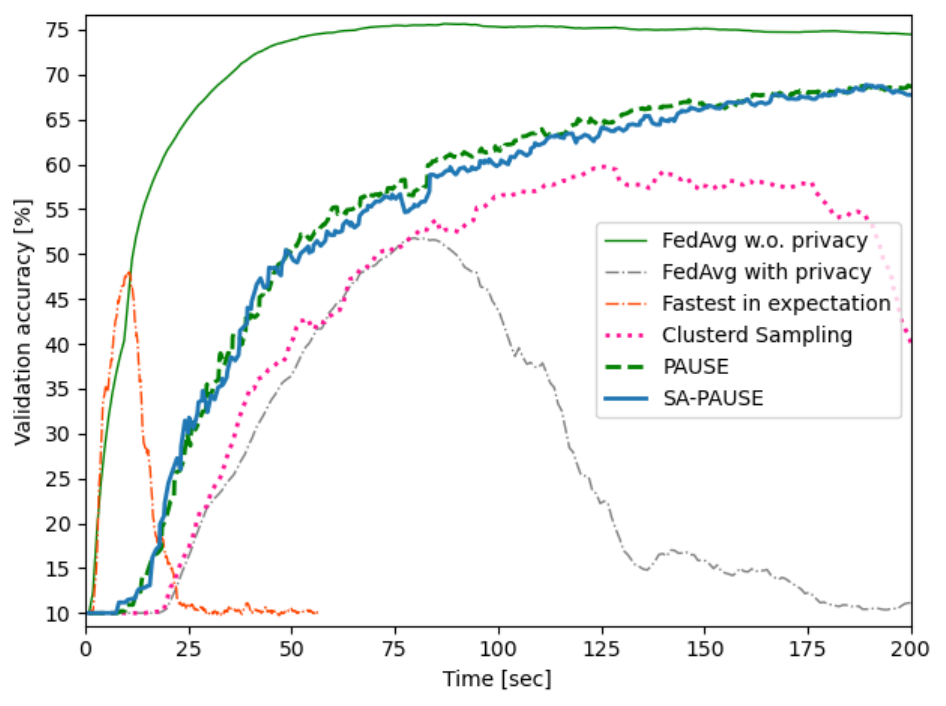}
       \vspace{-0.6cm}
    \caption{Validation accuracy vs. latency, 3 layers CNN trained on CIFAR-10, non-i.i.d data, small network, reward \ref{itm:average}}
    \label{fig: CIFAR_small_non_iid_val_acc}
    \vspace{-0.4cm}
\end{figure}

\begin{figure}
    \centering
    \includegraphics[width=1\linewidth, height=5.5cm]{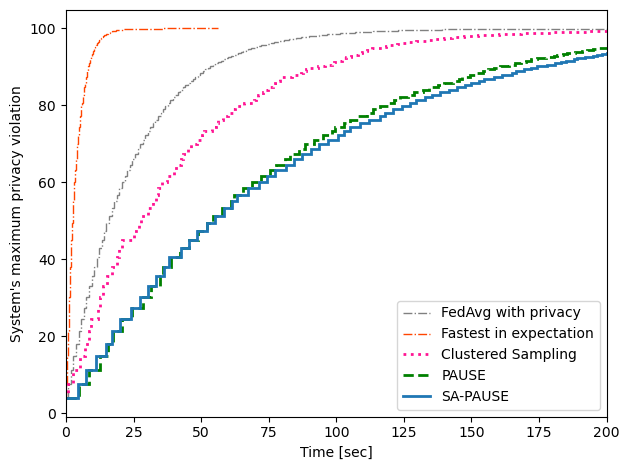}
       \vspace{-0.6cm}
    \caption{Privacy leakage vs.  latency, 3 layers CNN trained on CIFAR-10, non-i.i.d data,  small network, reward \ref{itm:average}}
    \label{fig: CIFAR non-iid 30 choose 5 privacy vs. time}
    \vspace{-0.2cm}
\end{figure}

\subsection{Large Networks}
\label{ssec:LargeNet}
We proceed to consider the large network settings. Here, we train {three models: one for MNIST with i.i.d. data distribution, and both mentioned \ac{cnn}s for CIFAR-10 with  non-i.i.d. data distribution.
}
\textcolor{NewColor2}{User selection for all three models is based on reward \ref{itm:average}, while the three-layered \ac{cnn} is also trained when using \ref{itm:Gen}.}
For these scenarios, we implemented two modifications. First, to accelerate the convergence of the \ac{sa} procedure in Algorithm~\ref{alg:tailored-SA} under a reasonable number of iterations, we modulate the temperature coefficient $C$ as in ~\cite{ben2004computing, bezakova2008accelerating}. This is accomplished by dividing the temperature coefficient by a constant $\kappa=30$, i.e., the temperature in the $j$th iteration becomes $\tau_j = \frac{C}{\kappa \log(1+j)}$~\cite{ben2004computing, bezakova2008accelerating}. Second, to enhance exploitation~\cite{wu2018adaptive,drugan2014pareto},  we amplified the empirical mean $\overline{\mu_k}(t)$ in~\ref{eq: ucb definition} by another constant, $\zeta=3$.

{The overall privacy budget for the MNIST experiment was set to $ \epsb=10$. In contrast, the CNNs trained on CIFAR-10 had privacy budgets of $\epsb=10$ for the 3-layer CNN and $\epsb=15$ for the larger neural network.}
The data quantities were sampled from a Dirichlet distribution with parameter ${\bm\alpha} = {\bm 2}$ in the first case, and with ${\bm\alpha} = {\bm 3}$ in the subsequent cases. All cases exhibited consistent trends with the small networks tests, systematically demonstrating SA-PAUSE's robustness across diverse privacy budgets, datasets, and network scales.

As before, we present the validation accuracy versus communication latency alongside the maximum overall privacy leakage versus time. These results are presented in Figs.~\ref{fig: MNIST_large_iid_acc}-\ref{fig: MNIST_large_iid_privacy} for MNIST; in Figs.~\ref{fig: CIFAR_large_non_iid_val_acc}-\ref{fig: rev2_constrained_large_privacy} for CIFAR-10 with the small \ac{cnn}; and in Figs.~\ref{fig: CIFAR_cnn5_acc_paper}-\ref{fig: CIFAR_cnn5_privacy_paper} for CIFAR-10 with the larger \ac{cnn}. 
{These results systematically demonstrate the ability of our proposed \ac{sa}-\ac{pause} to facilitate rapid learning with balanced and limited privacy leakage, not only over large networks but also on deeper neural network architectures.} 
\textcolor{NewColor2}{Particularly, comparing the performance achieved with the reward \eqref{itm:average} to the one in \ref{itm:Gen}, we note that the additional network constraints encapsulated in \ref{itm:Gen} affect convergence, especially in its early stages. }

\begin{figure}[t]
    \centering
    \includegraphics[width=1\linewidth, height=5.5cm]{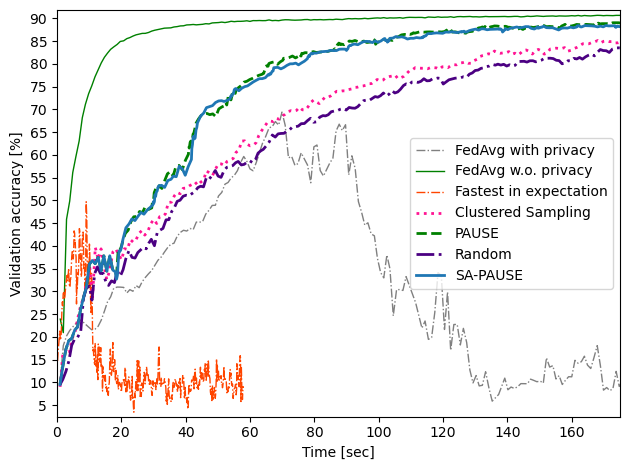}
       \vspace{-0.6cm}
    \caption{Validation accuracy vs. latency, MNIST, i.i.d data, large network, reward \ref{itm:average}}
    \label{fig: MNIST_large_iid_acc}
    \vspace{-0.4cm}
\end{figure}

\begin{figure}
    \centering
    \includegraphics[width=1\linewidth, height=5.5cm]{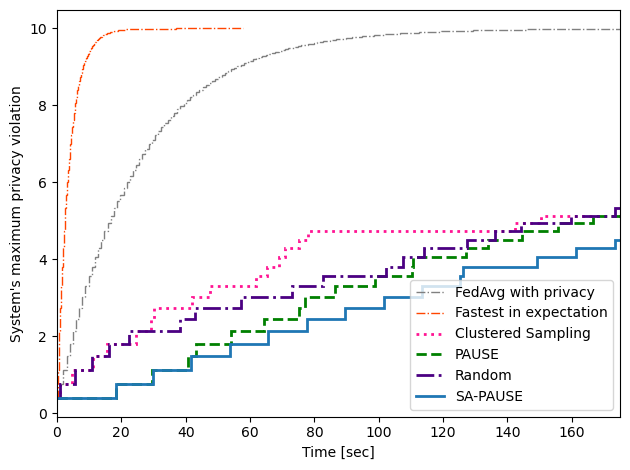}
       \vspace{-0.6cm}
    \caption{Privacy leakage vs. latency, MNIST, i.i.d data, large network, reward \ref{itm:average}}
    \label{fig: MNIST_large_iid_privacy}
    \vspace{-0.2cm}
\end{figure}


\begin{figure}[t]
    \centering
    \includegraphics[width=1\linewidth, height=5.5cm]{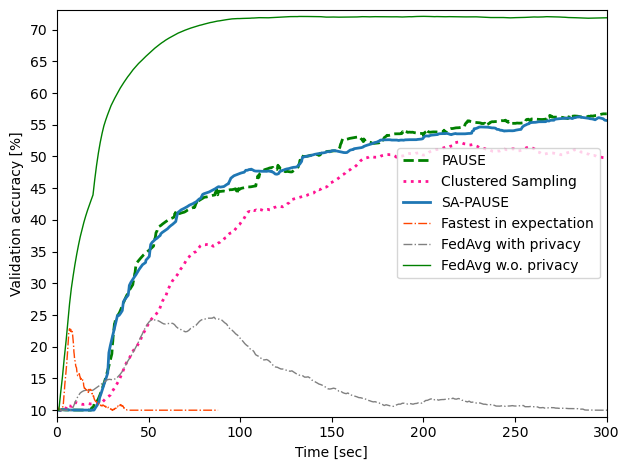}
       \vspace{-0.6cm}
    \caption{Validation accuracy vs. latency, 3 layers CNN trained on CIFAR-10, non-i.i.d data, large network, reward \ref{itm:average}}
    \label{fig: CIFAR_large_non_iid_val_acc}
    \vspace{-0.4cm}
\end{figure}

\begin{figure}
    \centering
    \includegraphics[width=1\linewidth, height=5.5cm]{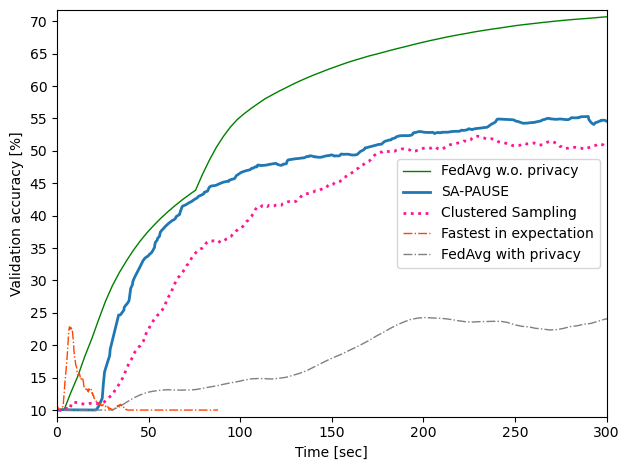}
    \vspace{-0.6cm}
    \caption{Validation accuracy vs. latency, 3 layers CNN trained on CIFAR-10, non-i.i.d data, large network, reward \ref{itm:Gen}}
    \label{fig: rev2_constrained_large_acc}
    \vspace{-0.2cm}
\end{figure}

\begin{figure}[t]
    \centering
    \includegraphics[width=1\linewidth, height=5.5cm]{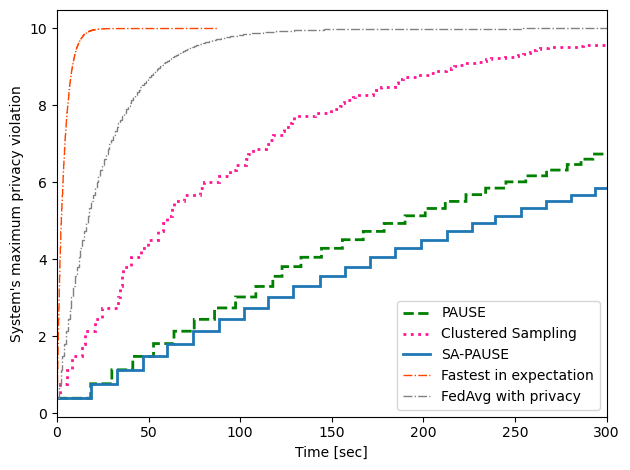}
       \vspace{-0.6cm}
    \caption{Privacy leakage vs.  latency, 3 layers CNN trained on CIFAR-10, non-i.i.d data, large network,  reward \ref{itm:average}}
    \label{fig: CIFAR_large_non_iid_privacy}
    \vspace{-0.2cm}
\end{figure}

\begin{figure}
    \centering
    \includegraphics[width=1\linewidth, height=5.5cm]{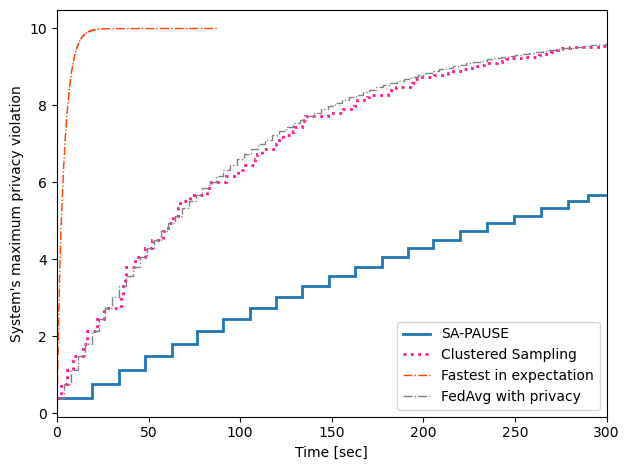}
    \vspace{-0.6cm}
    \caption{Privacy leakage vs. latency, 3 layers CNN trained on CIFAR-10, non-i.i.d data, large network, reward \ref{itm:Gen}}
    \label{fig: rev2_constrained_large_privacy}
    \vspace{-0.2cm}
\end{figure}


\begin{figure}[t]
    \centering
    \includegraphics[width=1\linewidth, height=5.5cm]{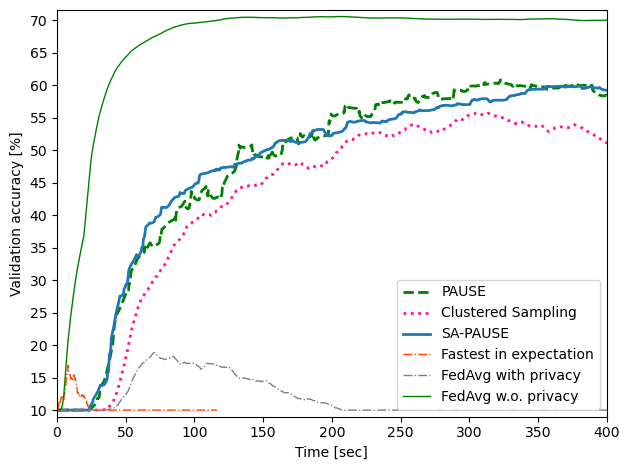}
       \vspace{-0.6cm}
    \caption{Validation accuracy vs. latency, 5 layers CNN trained on CIFAR-10, non-i.i.d data, large network, reward \ref{itm:average}}
    \label{fig: CIFAR_cnn5_acc_paper}
    \vspace{-0.4cm}
\end{figure}

\begin{figure}
    \centering
    \includegraphics[width=1\linewidth, height=5.5cm]{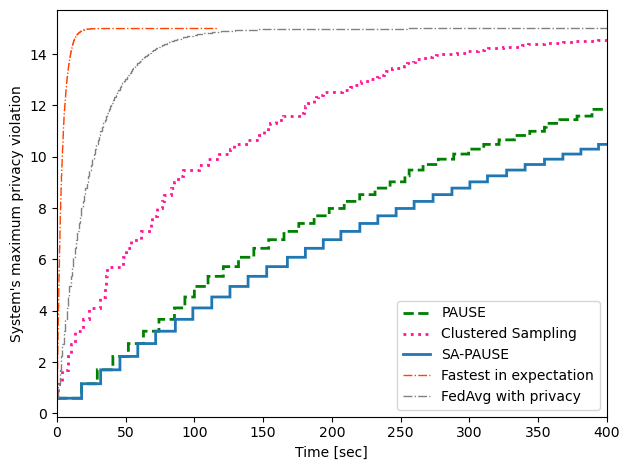}
       \vspace{-0.6cm}
    \caption{Privacy leakage vs.  latency, 5 layers CNN trained on CIFAR-10, non-i.i.d data, large network,  reward \ref{itm:average}}
    \label{fig: CIFAR_cnn5_privacy_paper}
    \vspace{-0.2cm}
\end{figure}

\color{black}

\section{Conclusion}
\label{sec:conclusions}
We proposed \ac{pause}, an active and dynamic user selection algorithm under fixed privacy constraints. This algorithm balances three \ac{fl} aspects: accuracy of the trained model, communication latency, and the system's privacy. We showed that under common assumptions, \ac{pause}'s regret achieves a logarithmic order with time. To address complexity and scalability, we developed SA-PAUSE, which integrates a \ac{sa} algorithm with theoretical guarantees to approximate \ac{pause}'s brute force search in feasible running time. We numerically demonstrated SA-PAUSE's ability to approximate PAUSE's search and its superiority over alternative approaches in diverse experimental scenarios.

\ifFullVersion
\begin{appendix}
		\numberwithin{equation}{subsection}	
\numberwithin{lemma}{subsection} 
\numberwithin{corollary}{subsection} 
\numberwithin{remark}{subsection} 
\numberwithin{equation}{subsection}	 

\subsection{Proof of Theorem~\ref{THMREGRET}}\label{app:Regret} 
 
In the following,  define $h_k(t)\triangleq \sqrt{\frac{(m+1)\log(t)}{T_k(t)}}$. 
The regret can be bounded  following the definition of $\Delta_{max}$ as
\begin{align} 
    R^{\cP}(n) &= \E\left[\sum_{t=1}^n r(\cG_t,t) - r(\cP_t,t)\right]  \notag \\ 
    &\leq \Delta_{\max}\E\left[\sum_{t=1}^n  \1 (r(\cG_t,t) \neq r(\cP_t,t)\right],
     \label{eq: opening inequality} 
\end{align}
We introduce another indicator function for every $i \in \bK$ along with its cumulative sum, denoted:
\begin{align*}
  I_i(t) &\triangleq \left.
  \begin{cases}
    1, & 
          \begin{cases} 
                & i = \underset{k \in \cC_t}{\argmin}~  T_k(t-1) \\
                & r(\cP_t,t) \neq r(\cG_t,t)  
          \end{cases} \\
    0, & \text{else}
  \end{cases}\right.
\,
N_i(n) \triangleq \sum_{t=1}^n I_i(t).
\end{align*}

Let $\cC_t \triangleq \cP_t \cup \cG_t$. 
In every round $t$ where $r(\cP_t) \neq r(\cG_t)$, the counter $N_k(t)$ is incremented for only a single user in $\cC_t$, while for the remaining users $N_k(t-1) = N_k(t)$. Thus, it holds that
$\sum_{t=1}^n  \1 \big ( (\cG_t,t) \neq r(\cP_t,t)\big ) = \sum_{k=1}^K N_k(n)$.
Substituting this  into~\eqref{eq: opening inequality}, we obtain that
\begin{equation}\label{eq: result of part 1}
    R^{\cP}(n) \leq \Delta_{\max}  \sum_{k=1}^K \E[N_k(n)].
\end{equation}

In the remainder, we focus on bounding $\E[N_k(n)]$ for every $k \in \bK$. After that, we substitute the derived upper bound into~\eqref{eq: result of part 1}.
To that aim, let $k \in \bK$ and fix some $l \in \mathbb{N}$ whose value is determined later. We note that:
\begin{align}
  &  \E[N_k(n)] = \E[\sum_{t=1}^n \1(I_k(t) = 1)]  \notag \\
    &=\E[\sum_{t=1}^n \1 (I_k(t) = 1, N_k(t) \leq l) + \1(I_k(t) = 1, N_k(t) > l)] \notag \\
    &\qquad\qquad  \stackrel{(a)}{\leq} l + \E[\sum_{t=1}^n \1(I_k(t) = 1, N_k(t) > l)],
\label{eq: bounding the counter}
\end{align}
where $(a)$ arises from  considering the cases $N_k(n) \leq l$ and its complementary state.

\ac{pause}'s policy \eqref{eq: pause's policy} implies that in every iteration:
\begin{align}
    &\!\!\min_{k \in \cP_t} {\rm ucb}(k,t-1) \! + \!\alpha \Phi_{\rm g}\left(\{g_k(t-1)\}_{k\in \cP_t},   \textcolor{NewColor2}{\cP_t}\right) 
    + \notag \\ 
    &\gamma \Phi_{\rm p}(\{p_k(t-1)\}_{k \in \cP_t}, \textcolor{NewColor2}{\cP_t})) \geq \notag \\ 
    &\!\!\min_{k \in \cG_t} {\rm ucb}(k,t\!-\!1) \!+ \!\alpha \Phi_{\rm g}\left(\{g_k(t-1)\}_{k\in \cG_t}, \textcolor{NewColor2}{\cG_t}\right) 
    +\notag \\ 
    & \gamma \cdot \Phi_{\rm p}(\{p_k(t-1)\}_{k \in \cG_t}, \textcolor{NewColor2}{\cG_t})).
    \label{eqn:Event_wp_1}
\end{align}
For the sake of readability we next abbreviate $\alpha\cdot \Phi_{\rm g}\left(\{g_k(t-1)\}_{k\in \cS}, \textcolor{NewColor2}{\cS}\right) 
    + \gamma \cdot \Phi_{\rm p}(\{p_k(t-1)\}_{k \in \cS}, \textcolor{NewColor2}{\cS})$ as $\sum_{\rm w \in p,g} \frac{\alpha_{\rm w}}{m} \Phi_{\rm w}(\{\rm w_k(t-1)\}_{k \in \cS})$, where $\alpha_p = \alpha$, and $\alpha_g = \gamma$.
Since the above-mentioned happens with probability one, we can incorporate it into the mentioned inequality \eqref{eq: bounding the counter} along with the featured notation:
\begin{equation*}
\begin{split}
    &\E[N_k(n)] \leq  l +\E\Biggl[\sum_{t=1}^n \1\Biggl\{I_k(t) = 1, N_k(t) > l, \\ &\min_{k \in \cP_t} {\rm ucb}(k,t-1) + \sum_{\rm w \in p,g} \frac{\alpha_{\rm w}}{m} \Phi_{\rm w}(\{\rm w_k(t-1)\}_{k \in \cP_t}) \geq \\ &\min_{k \in \cG_t} {\rm ucb}(k,t-1) + \sum_{\rm w \in p,g} \frac{\alpha_{\rm w}}{m} \Phi_{\rm w}(\{\rm w_k(t-1)\}_{k \in \cG_t})\Biggr\} \Biggr].
\end{split}
\end{equation*}

We now denote the users chosen in the $t$th iteration by the \ac{pause} algorithm and by the Genie as:
%
    $\cG_t = \Tilde{u}_{t,1},...,\Tilde{u}_{t,m}$ and
    $\cP_t = u_{t,1},...,u_{t,m}$, respectively.
%
%
For every $t$, the indicator function in the sum is equal to 1 only if the $k$th user is chosen the least at the beginning of the $t$th iteration, i.e., $T_k(t-1) \leq T_j(t-1)$ for every $j \in \cC_t$. The intersection of $I_k(t) = 1$ with $N_k(t) > l$ implies $T_k(t-1) \leq l$. Therefore, this intersection of events implies that for every $j \in \cC_t, l \leq T_j(t-1) \leq t-1$. 
Using this result, we can further bound every event in the indicator functions in the upper bound of $\E[N_k(n)]$:
\begin{equation*}
\begin{split}
     \E[N_k(n)] \leq 
     &l +\E\Biggl [\sum_{t=1}^n \1\biggl\{I_k(t) = 1, N_k(t) > l, \\ &\min_{l \leq T_{u_{t,1}},...,T_{u_{t,m}} \leq t-1} \biggr(\min_{k \in \cP_t} {\rm ucb}(k,t-1) + \\ &+ \sum_{\rm w \in p,g} \frac{\alpha_{\rm w}}{m} \Phi_{\rm w}(\{\rm w_k(t-1)\}_{k \in \cP_t}) \biggl)\geq  \\ & \min_{l \leq T_{\Tilde{u}_{t,1}},...,T_{\Tilde{u}_{t,m}} \leq t-1} \biggl( \min_{k \in \cG_t} {\rm ucb}(k,t-1) + \\ &\sum_{\rm w \in p,g} \frac{\alpha_{\rm w}}{m} \Phi_{\rm w}(\{\rm w_k(t-1)\}_{k \in \cG_t})\biggr)\biggr\} \biggr].
\end{split}
\end{equation*}

Using the fact that for any finite set of events of size $q$, it holds that $\{A_i\}_{i=1}^q$, $\1(\cup_{i=1}^q A_i) \leq \sum_{i=1}^q \1(A_i)$, and that the expectation of an indicator function is the probability of the internal event occurring, we have that
\begin{equation}\label{eq: split before upper bounding}
\begin{split}
    &\E[N_k(n)] \leq l + \sum_{t=1}^n\sum_{l \leq T_{\Tilde{u}_{t,1}},...,T_{\Tilde{u}_{t,m}}, T_{u_{t,1}},...,T_{u_{t,m}} \leq t-1} \\ &\bP\biggl[\min_{k \in \cP_t} {\rm ucb}(k,t\!-\!1) \!+\! \sum_{\rm w \in p,g} \frac{\alpha_{\rm w}}{m} \Phi_{\rm w}(\{\rm w_k(t-1)\}_{k \in \cP_t}) \geq \\ & \min_{k \in \cG_t} {\rm ucb}(k,t\!-\!1)\! +\! \sum_{\rm w \in p,g} \frac{\alpha_{\rm w}}{m} \Phi_{\rm w}(\{\rm w_k(t-1)\}_{k \in \cG_t})\biggr].
\end{split}
\end{equation}

In the following steps, we focus on bounding the terms in the double sum. To that aim,
we define the following:
\begin{equation}\label{eq: definitions of a_t and b_t}
    a_t \!=\! \argmin_{k\in\cP_t} {\rm ucb}(k,t\!-\!1), ~ b_t \!=\! \argmin_{k\in\cG_t} {\rm ucb}(k,t\!-\!1).
\end{equation}
Using these notations and writing $h_{a_t} \triangleq h_{a_t}(t)$, we state the following lemma:
\begin{lemma}\label{lm: contained events - 1}
The  event \eqref{eqn:Event_wp_1} implies that at least one of the next three events occurs:
\begin{enumerate}
    \item $\Bar{x}_{b_t} + h_{b_t} \leq \mu_{b_t}$;
    \item $\Bar{x}_{a_t} \geq \mu_{a_t} + h_{a_t}$;
    \item $
    \mu_{b_t} + \sum_{\rm w \in p,g} \frac{\alpha_{\rm w}}{m} \Phi_{\rm w}(\{\rm w_k(t-1)\}_{k \in \cG_t}) <  \mu_{a_t} + 2h{a_t} + \sum_{\rm w \in p,g} \frac{\alpha_{\rm w}}{m} \Phi_{\rm w}(\{\rm w_k(t-1)\}_{k \in \cP_t})$.
\end{enumerate}
\end{lemma}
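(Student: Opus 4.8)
\textbf{Proof plan for Lemma~\ref{lm: contained events - 1}.}
The plan is to argue by contradiction: assume that none of the three listed events occurs, and show that the inequality \eqref{eqn:Event_wp_1} (equivalently, the event inside the probability in \eqref{eq: split before upper bounding}) cannot hold. Recall that $a_t$ and $b_t$ from \eqref{eq: definitions of a_t and b_t} are precisely the users attaining the minimal ${\rm ucb}$ in $\cP_t$ and $\cG_t$ respectively, so that $\min_{k\in\cP_t}{\rm ucb}(k,t-1) = \overline{\mu}_{a_t}(t-1) + h_{a_t}$ and $\min_{k\in\cG_t}{\rm ucb}(k,t-1) = \overline{\mu}_{b_t}(t-1) + h_{b_t}$, where I write $\overline{\mu}_{a_t}$ for $\overline{\mu}_{a_t}(t-1)$ and similarly $h_{a_t} = h_{a_t}(t-1)$ (matching the $\Bar{x}$ notation of the lemma to the empirical mean $\overline{\mu}$).

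First I would negate event~1, obtaining $\overline{\mu}_{b_t} + h_{b_t} > \mu_{b_t}$, i.e. the Genie's minimal ${\rm ucb}$ term is strictly larger than $\mu_{b_t}$. Next I negate event~2, obtaining $\overline{\mu}_{a_t} < \mu_{a_t} + h_{a_t}$, hence $\overline{\mu}_{a_t} + h_{a_t} < \mu_{a_t} + 2h_{a_t}$; that is, \ac{pause}'s minimal ${\rm ucb}$ term is strictly below $\mu_{a_t} + 2h_{a_t}$. Then I negate event~3, which gives the reversed inequality
\[
\mu_{b_t} + \!\!\sum_{\rm w \in p,g}\!\! \frac{\alpha_{\rm w}}{m}\Phi_{\rm w}(\{{\rm w}_k(t-1)\}_{k\in\cG_t}) \;\geq\; \mu_{a_t} + 2h_{a_t} + \!\!\sum_{\rm w \in p,g}\!\! \frac{\alpha_{\rm w}}{m}\Phi_{\rm w}(\{{\rm w}_k(t-1)\}_{k\in\cP_t}).
\]
Chaining these three inequalities: the \ac{pause} side of \eqref{eqn:Event_wp_1} equals $\overline{\mu}_{a_t}+h_{a_t}$ plus \ac{pause}'s generalization/privacy terms, which by the negation of event~2 is strictly less than $\mu_{a_t}+2h_{a_t}$ plus those same terms, which by the negation of event~3 is at most $\mu_{b_t}$ plus the Genie's generalization/privacy terms, which by the negation of event~1 is strictly less than $\overline{\mu}_{b_t}+h_{b_t}$ plus the Genie's terms — and the latter is exactly the Genie side of \eqref{eqn:Event_wp_1}. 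This yields a strict inequality contradicting \eqref{eqn:Event_wp_1}, so at least one of the three events must hold.

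The only mildly delicate point — and the part I would write out carefully — is bookkeeping the additive generalization and privacy terms: they appear identically on both sides of each step and are never bounded, only carried along, so one must make sure the $\min$ over ${\rm ucb}$ is separated cleanly from the $\Phi_{\rm g},\Phi_{\rm p}$ contributions (which is exactly why $a_t,b_t$ are introduced). The rest is a routine three-link chain of inequalities, so I do not anticipate a genuine obstacle here; the lemma is essentially the standard UCB decomposition of Auer et al.\ adapted to carry the extra reward terms unchanged.
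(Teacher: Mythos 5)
Your proposal is correct and is essentially identical to the paper's own proof: the paper also argues by contradiction, chaining the negations of the three events (in the order (1), (3), (2), i.e., the same chain written from the Genie side down rather than from the PAUSE side up) to obtain $\min_{k \in \cP_t} {\rm ucb}(k,t-1) + \sum_{\rm w} \frac{\alpha_{\rm w}}{m}\Phi_{\rm w}(\cdot) < \min_{k \in \cG_t} {\rm ucb}(k,t-1) + \sum_{\rm w} \frac{\alpha_{\rm w}}{m}\Phi_{\rm w}(\cdot)$, contradicting \eqref{eqn:Event_wp_1}. Your identification of $\Bar{x}_{a_t}$ with the empirical mean and of $\min_{k\in\cP_t}{\rm ucb}$ with $\Bar{x}_{a_t}+h_{a_t}$ is exactly the bookkeeping the paper performs via the definitions in \eqref{eq: definitions of a_t and b_t}.
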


\begin{proof}
Proof by contradiction: we assume all three events don't occur and examine the following:
\begin{equation*}
\begin{split}
    &\Bar{x}_{b_t} + h_{b_t} + \sum_{\rm w \in p,g} \frac{\alpha_{\rm w}}{m} \Phi_{\rm w}(\{\rm w_k(t-1)\}_{k \in \cG_t})  \stackrel{(1)}{>} \\ &\mu_{b_t} + \sum_{\rm w \in p,g} \frac{\alpha_{\rm w}}{m} \Phi_{\rm w}(\{\rm w_k(t-1)\}_{k \in \cG_t}) \stackrel{(3)}{\geq} \\ &\mu_{a_t} + 2h_{a_t} + \sum_{\rm w \in p,g} \frac{\alpha_{\rm w}}{m} \Phi_{\rm w}(\{\rm w_k(t-1)\}_{k \in \cP_t}) \stackrel{(2)}{>} \\ &\Bar{x}_{a_t} + h_{a_t} + \sum_{\rm w \in p,g} \frac{\alpha_{\rm w}}{m} \Phi_{\rm w}(\{\rm w_k(t-1)\}_{k \in \cP_t}).
\end{split}
\end{equation*}
By the definitions of $a_t$ and $b_t$~\eqref{eq: definitions of a_t and b_t}, the inequality above can also be written as:
\begin{align} 
    &\!\!\min_{k \in \cP_t} {\rm ucb}(k,t\!-\!1) \!+ \sum_{\rm w \in p,g} \frac{\alpha_{\rm w}}{m} \Phi_{\rm w}(\{\rm w_k(t-1)\}_{k \in \cP_t}) < \notag \\ &\!\!\min_{k \in \cG_t} {\rm ucb}(k,t\!-\!1)\! + \sum_{\rm w \in p,g} \frac{\alpha_{\rm w}}{m} \Phi_{\rm w}(\{\rm w_k(t-1)\}_{k \in \cG_t}),
\label{eq: the primary event}
\end{align}
contradicting our initially assumed event \eqref{eqn:Event_wp_1}.
\end{proof}

Applying the union bound and the relationship between the events shown in Lemma \ref{lm: contained events - 1} implies:
\begin{align}
    &\bP\biggl[\min_{k \in \cP_t} {\rm ucb}(k,t-1) + \sum_{\rm w \in p,g} \frac{\alpha_{\rm w}}{m} \Phi_{\rm w}(\{\rm w_k(t-1)\}_{k \in \cP_t}) \notag \\ &\geq \min_{k \in \cG_t} {\rm ucb}(k,t-1) + \sum_{\rm w \in p,g} \frac{\alpha_{\rm w}}{m} \Phi_{\rm w}(\{\rm w_k(t-1)\}_{k \in \cG_t})\biggr] \notag \\ &\leq \overbrace{\bP[\Bar{x}_{b_t} + h_{b_t} \leq \mu_{b_t}]}^{\triangleq (1)} + \overbrace{\bP[\Bar{x}_{a_t} \geq \mu_{a_t} + h_{a_t}]}^{\triangleq (2)} + \notag \\ &\bP[\mu_{b_t} + \sum_{\rm w \in p,g} \frac{\alpha_{\rm w}}{m} \Phi_{\rm w}(\{\rm w_k(t-1)\}_{k \in \cG_t}) < \notag \\ &\underbrace{\mu_{a_t}\! +\! 2h{a_t} \!+ \sum_{\rm w \in p,g} \frac{\alpha_{\rm w}}{m} \Phi_{\rm w}(\{\rm w_k(t-1)\}_{k \in \cP_t})]}_{\triangleq (3)}.
    \label{eq: splitting the event with union bound} 
\end{align}

We obtained three probability terms -- $(1)$, $(2)$, and $(3)$. We will start with bounding the first two using Hoeffding's inequality~\cite{hoeffding1994probability}. Term $(3)$ will be bounded right after in a different manner. We'll demonstrate how the first term is bounded; the second one is done similarly by replacing $b_t$ with $a_t$:
\begin{align} 
    &\bP[\Bar{x}_{b_t} + h_{b_t} \leq \mu_{b_t}] = \bP[\Bar{x}_{b_t} - \mu_{b_t} \leq -h_{b_t}] \notag\\ &=\bP\left[\sum_{j=1}^{T_{b_t}(t-1)} \frac{\tau_{min}}{(\tau_{b_t})_j} - \mu_{b_t} \leq -h_{b_t}T_{b_t}(t-1)\right]\notag   \\ &\leq e^-\frac{2T_{b_t}^2(t-1)(m+1)\log(t)}{T_{b_t}^2(t-1)} = t^{-2(m+1)},\label{eq: hoffeding's trick}
\end{align}
where $(\tau_{b_t})_j$ is the  latency of the user $b_t$ at the $j$th round it participated.
This results in the following inequalities:
\begin{equation*}
    \overbrace{\bP[\Bar{x}_{b_t} \!+\! h_{b_t} \leq \mu_{b_t}]}^{ = (1)} \leq  t^{-2(m\!+\!1)},~ \overbrace{\bP[\Bar{x}_{a_t} \geq \mu_{a_t} \!+ \!h_{a_t}]}^{ = (2)} \leq  t^{-2(m\!+\!1)}.
\end{equation*}

To bound $(3)$ we define another two definitions:
\begin{equation}\label{eq: definitions of A_t and B_t}
\begin{split}
    &A_t = \argmin_{k\in\cP_t} \mu_k, \quad B_t = \argmin_{k\in\cG_t} \mu_k.
\end{split}
\end{equation}
Using the law of total probability to divide $(3)$ into 2 parts:
\begin{equation*}
\begin{split}
    &\bP\biggr[\mu_{b_t} + \sum_{\rm w \in p,g} \frac{\alpha_{\rm w}}{m} \Phi_{\rm w}(\{\rm w_k(t-1)\}_{k \in \cG_t}) < \\ &\underbrace{\mu_{a_t} + 2h{a_t} + \sum_{\rm w \in p,g} \frac{\alpha_{\rm w}}{m} \Phi_{\rm w}(\{\rm w_k(t-1)\}_{k \in \cP_t})\biggr]}_{\triangleq (3)} \\ &= \bP\biggl[\bigl(\mu_{b_t} + \sum_{\rm w \in p,g} \frac{\alpha_{\rm w}}{m} \Phi_{\rm w}(\{\rm w_k(t-1)\}_{k \in \cG_t}) < \\ &\mu_{a_t} + 2h{a_t} + \sum_{\rm w \in p,g} \frac{\alpha_{\rm w}}{m} \Phi_{\rm w}(\{\rm w_k(t-1)\}_{k \in \cP_t})\bigr)  \\
    &\cap (b_t = B_t) \biggr] \\ &+ \bP\biggl[\bigl(\mu_{b_t} + \sum_{\rm w \in p,g} \frac{\alpha_{\rm w}}{m} \Phi_{\rm w}(\{\rm w_k(t-1)\}_{k \in \cG_t}) < \\ &\mu_{a_t} + 2h{a_t} + \sum_{\rm w \in p,g} \frac{\alpha_{\rm w}}{m} \Phi_{\rm w}(\{\rm w_k(t-1)\}_{k \in \cP_t})\bigr) \\
    &\cap (b_t \neq B_t) \biggr]. 
\end{split}
\end{equation*}
We denote the former term as $(3a)$ and the latter as $(3b)$:
\begin{subequations}
    \label{eq: definitions of 3a and 3b}
\begin{align}
    &(3a) \triangleq \bP\biggl[\bigl(\mu_{b_t} + \sum_{\rm w \in p,g} \frac{\alpha_{\rm w}}{m} \Phi_{\rm w}(\{\rm w_k(t-1)\}_{k \in \cG_t}) < \notag\\ 
    &\mu_{a_t} + 2h{a_t} + \sum_{\rm w \in p,g} \frac{\alpha_{\rm w}}{m} \Phi_{\rm w}(\{\rm w_k(t-1)\}_{k \in \cP_t})\bigr) \notag\\ 
    &\cap (b_t = B_t) \biggr],  \\
    &(3b) \triangleq \bP\biggl[\bigl(\mu_{b_t} + \sum_{\rm w \in p,g} \frac{\alpha_{\rm w}}{m} \Phi_{\rm w}(\{\rm w_k(t-1)\}_{k \in \cG_t}) < \notag\\\ 
    &\mu_{a_t} + 2h{a_t} + \sum_{\rm w \in p,g} \frac{\alpha_{\rm w}}{m} \Phi_{\rm w}(\{\rm w_k(t-1)\}_{k \in \cP_t})\bigr) \notag \\ 
    &\cap (b_t \neq B_t) \biggr].
\end{align}
\end{subequations}

In the following, we show that for a range of values of $l$, which so far was arbitrarily chosen, $3(a)$ is equal to $0$. Recalling the definitions of $a_t$~\eqref{eq: definitions of a_t and b_t} and $A_t$ \eqref{eq: definitions of A_t and B_t}, we know $\mu_{A_t} \leq \mu_{a_t}$. plugging this relation into probability of contained events in $(b)$, and upper bounding by omitting the intersection in $(a)$, yields:
\begin{equation*}
\begin{split}
    &(3a) \stackrel{(a)}{\leq} \\
    &\bP\biggl[\mu_{B_t} + \sum_{\rm w \in p,g} \frac{\alpha_{\rm w}}{m} \Phi_{\rm w}(\{\rm w_k(t-1)\}_{k \in \cG_t}) < 
    \\ &\mu_{a_t} + 2h{a_t} + \sum_{\rm w \in p,g} \frac{\alpha_{\rm w}}{m} \Phi_{\rm w}(\{\rm w_k(t-1)\}_{k \in \cP_t})\biggr] \\
    &\stackrel{(b)}{\leq} \bP\biggl[\mu_{B_t} + \sum_{\rm w \in p,g} \frac{\alpha_{\rm w}}{m} \Phi_{\rm w}(\{\rm w_k(t-1)\}_{k \in \cG_t}) < 
    \\ &\mu_{A_t} + 2h{a_t} + \sum_{\rm w \in p,g} \frac{\alpha_{\rm w}}{m} \Phi_{\rm w}(\{\rm w_k(t-1)\}_{k \in \cP_t})\biggr] \\
    &= \bP\biggl[\overbrace{\mu_{B_t} + \sum_{\rm w \in p,g} \frac{\alpha_{\rm w}}{m} \Phi_{\rm w}(\{\rm w_k(t-1)\}_{k \in \cG_t})}^{=C^{\cG}(\cG_t, t)} - 
    \\ &\overbrace{\bigl(\mu_{A_t} + \sum_{\rm w \in p,g} \frac{\alpha_{\rm w}}{m} \Phi_{\rm w}(\{\rm w_k(t-1)\}_{k \in \cP_t})\bigr)}^{=C^{\cG}(\cP_t, t)} < 2h_{a_t} \biggr] \\
    &= \bP\Bigl[C^{\cG}(\cG_t, t)-C^{\cG}(\cP_t, t) < 2\sqrt{\frac{(m+1)\log(t)}{T_{a_t}(t-1)}}\Bigr],
\end{split}
\end{equation*}
where the last two equalities derive from reorganizing the event and recalling the definitions of $C^{\cG}(\cS,t)$ and $h_k(t)$, respectively. We now show this event exists in probability 0, and then the latest bound implies $(3a)$ is equal to $0$ as well. We observe the mentioned event while recalling that $T_{a_t}(t) \geq l$ by the relevant indexes in the summation in \eqref{eq: split before upper bounding}:
\begin{align} 
    C^{\cG}(\cG_t, t)-C^{\cG}(\cP_t, t) &< 2\sqrt{\frac{(m+1)\log(t)}{T_{a_t}(t-1)}} \notag  \\
    &\leq2\sqrt{\frac{(m+1)\log(n)}{l}}.
    \label{eq: final event of 3a} 
\end{align}

Next, we observe an enhanced version of the Genie that is rewarded by an additive term of $\delta$ in every round that $\cG_t \neq \cP_t$. Recalling that we observe solely cases where this statement occurs, the LHS is directly larger than $\delta$. Thus, to secure the non-existence of this event, we may set any $l$ value fulfilling $\delta < 2\sqrt{\frac{(m+1)\log(n)}{l}}$. Recalling $\delta > 0$, we reorganize this condition into:
\begin{equation}\label{eq: first requirement on l}
    l \geq \Bigg\lceil \frac{4(m+1)\log(n)}{\delta^2} \Bigg\rceil.
\end{equation}

Moreover, this enhanced version adds another term of $\delta\sum_{t=1}^n  \E \bigg [\1 \Big ( (\cG_t,t) \neq r(\cP_t,t)\Big ) \bigg ] = \delta \sum_{k=1}^K \E[N_k(n)]$ to the regret, as noted later in the proof closure. 

Recall that we initially aimed to upper bound the probability of the event \eqref{eq: the primary event} by splitting it into three events using the union bound \eqref{eq: splitting the event with union bound}. We then showed $(1)$ and $(2)$ are bounded, and divided $(3)$ into 2 parts - $3(a)$ and $3(b)$. By setting an appropriate value of $l$ \eqref{eq: first requirement on l}, we demonstrated $3(a)$ can be shown to be equal to 0. The last step is to upper bound $3(b)$, which is done similarly.

We start by recalling the definition of $3(b)$ \eqref{eq: definitions of 3a and 3b} and then bound it by a containing event:
\begin{align} 
    &(3b) \triangleq \bP\biggl[\bigl(\mu_{b_t} + \sum_{\rm w \in p,g} \frac{\alpha_{\rm w}}{m} \Phi_{\rm w}(\{\rm w_k(t-1)\}_{k \in \cG_t}) < \notag\\ 
    &\qquad \mu_{a_t} + 2h{a_t} + \sum_{\rm w \in p,g} \frac{\alpha_{\rm w}}{m} \Phi_{\rm w}(\{\rm w_k(t-1)\}_{k \in \cP_t})\bigr) \notag\\ 
    &\qquad \cap (b_t \neq B_t) \biggr] \notag\\
    &\leq \bP[b_t \neq B_t] = \bP[\overline{\mu_{b_t}}(t) + h_{b_t} \leq \overline{\mu_{B_t}}(t) + h_{B_t}].\label{eq: first bounding 3b}
\end{align}

The last equality arises from the definitions of $b_t$ \eqref{eq: definitions of a_t and b_t} and $B_t$  \eqref{eq: definitions of A_t and B_t}, and definition \eqref{eq: ucb definition}. We now prove a lemma regarding this event, whose probability upper bounds $3(b)$:
\begin{lemma}
    The following event implies that at least one of the next three events occurs:
    \begin{equation}
    \overline{\mu_{b_t}}(t) + h_{b_t} \leq \overline{\mu_{B_t}}(t) + h_{B_t}  
    \end{equation}
    \begin{enumerate}
        \item $\overline{\mu_{b_t}}(t) + h_{b_t} \leq \mu_{b_t}$
        \item $\overline{\mu_{B_t}}(t) \geq \mu_{B_t} + h_{B_t}$
        \item $\mu_{b_t} < \mu_{B_t} + 2h_{B_t}$
    \end{enumerate}
\end{lemma}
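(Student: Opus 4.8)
The plan is to prove this lemma by contradiction, in exactly the same spirit as the proof of Lemma~\ref{lm: contained events - 1}. Suppose that the event $\overline{\mu_{b_t}}(t) + h_{b_t} \leq \overline{\mu_{B_t}}(t) + h_{B_t}$ holds, but none of the three listed events occurs. Negating each of the three then gives: event~(1) failing yields $\overline{\mu_{b_t}}(t) + h_{b_t} > \mu_{b_t}$; event~(2) failing yields $\overline{\mu_{B_t}}(t) < \mu_{B_t} + h_{B_t}$; and event~(3) failing yields $\mu_{b_t} \geq \mu_{B_t} + 2 h_{B_t}$.

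The key step is to concatenate these three inequalities. Writing $2h_{B_t} = h_{B_t} + h_{B_t}$ and starting from the left-hand side of the hypothesized event, one obtains $\overline{\mu_{b_t}}(t) + h_{b_t} > \mu_{b_t} \geq \mu_{B_t} + 2 h_{B_t} = (\mu_{B_t} + h_{B_t}) + h_{B_t} > \overline{\mu_{B_t}}(t) + h_{B_t}$, where the first inequality is the negation of~(1), the second the negation of~(3), and the last the negation of~(2). This chain yields the strict inequality $\overline{\mu_{b_t}}(t) + h_{b_t} > \overline{\mu_{B_t}}(t) + h_{B_t}$, which directly contradicts the hypothesized (non-strict) event. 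Hence at least one of events~(1)--(3) must occur, which is the claim.

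This argument is a routine three-term telescoping of scalar inequalities, so I expect no genuine obstacle; the only point requiring mild care is the strict/non-strict bookkeeping, arranged so that the concluded inequality is strict and therefore truly contradicts the hypothesis. Once the lemma is in place, the remainder of the proof of Theorem~\ref{THMREGRET} proceeds just as in the $(1),(2),(3a)$ analysis: the probabilities of events~(1) and~(2) are each bounded by $t^{-2(m+1)}$ via Hoeffding's inequality as in~\eqref{eq: hoffeding's trick}, and event~(3), namely $\mu_{b_t} < \mu_{B_t} + 2h_{B_t}$ intersected with $b_t \neq B_t$, is shown to be impossible over the summation range $T_{B_t}(t-1) \geq l$ once $l$ satisfies a condition of the same form as~\eqref{eq: first requirement on l} (again exploiting the expected-latency gap $\delta$, exactly as was done to eliminate $(3a)$). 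Summing these contributions together with the leading $l$ term bounds $\E[N_k(n)]$, which via~\eqref{eq: result of part 1} yields the stated regret bound for $R^{\cP}(n)$.
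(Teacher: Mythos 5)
Your proof is correct and follows exactly the same contradiction argument as the paper: negating events (1)--(3) and chaining the resulting inequalities into $\overline{\mu_{b_t}}(t) + h_{b_t} > \mu_{b_t} \geq \mu_{B_t} + 2h_{B_t} > \overline{\mu_{B_t}}(t) + h_{B_t}$, contradicting the hypothesized non-strict inequality. The strict/non-strict bookkeeping you note is handled identically in the paper, so there is nothing to add.
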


\begin{proof}
    We  prove by contradiction, as $\overline{\mu_{b_t}}(t) + h_{b_t} \stackrel{(1)}{>} \mu_{b_t} \stackrel{(3)}{\geq} \mu_{B_t} + 2h_{B_t} \stackrel{(2)}{>} \overline{\mu_{B_t}}(t) + h_{B_t}$, 
thus proving the lemma
\end{proof}

Combining the lemma, the union bound, and the upper bound we found in \eqref{eq: first bounding 3b} yields:
\begin{align*}
    3(b) \leq &\bP[\overline{\mu_{b_t}}(t) - \mu_{b_t} \leq -h_{b_t}]  + \bP[\overline{\mu_{B_t}}(t) - \mu_{B_t} \geq h_{B_t}] \\
    &+ \bP[\mu_{b_t} < \mu_{B_t} + 2h_{B_t}].
\end{align*}

We already showed in \eqref{eq: hoffeding's trick} that the first term is bounded by $t^{-2(m+1)}$. Repeating the same steps for $B_t$ instead of $b_t$ we can show that this value also bounds the second term. Furthermore, we now show that the event in the third term occurs with probability 0 when setting an appropriate value of $l$. Observing the mentioned event:
\begin{equation}
\begin{split}
    \mu_{b_t} - \mu_{B_t} < 2\frac{(m+1)log(t)}{T_{B_t}(t-1)}.
\end{split}
\end{equation}

Similar to \eqref{eq: first requirement on l}, and recalling $\delta$'s definition and $b_t \neq B_t$, by demanding $l \geq \Big\lceil \frac{4(m+1)\log(n)}{\delta^2} \Big\rceil$ we assure this event occurs with probability 0. As this is the same range as in \eqref{eq: first requirement on l}, we set $l$ to be the lowest integer in this range, i.e., $l = \Big\lceil \frac{4(m+1)\log(n)}{\delta^2} \Big\rceil$.

Finally, as we showed: $(3) \leq 2t^{-2(m+1)}$. plugging the bounds on $(1)$, $(2)$, and $(3)$ into \eqref{eq: splitting the event with union bound} we obtain:
\begin{equation*}
\begin{split}
    &\bP\biggl[\min_{k \in \cP_t} {\rm ucb}(k,t-1) + \sum_{\rm w \in p,g} \frac{\alpha_{\rm w}}{m} \Phi_{\rm w}(\{\rm w_k(t-1)\}_{k \in \cP_t}) \\ 
    &\geq \min_{k \in \cG_t} {\rm ucb}(k,t-1) + \sum_{\rm w \in p,g} \frac{\alpha_{\rm w}}{m} \Phi_{\rm w}(\{\rm w_k(t-1)\}_{k \in \cG_t})\biggr] \\
    &\leq \overbrace{t^{-2(m+1)}}^{\geq (1)} + \overbrace{t^{-2(m+1)}}^{\geq (2)} + 
    \overbrace{2t^{-2(m+1)}}^{\geq (3)} = 4t^{-2(m+1)}.
\end{split}
\end{equation*}
Substituting this bound along with the chosen value of $l$ into the result we obtained at the beginning of the proof \eqref{eq: split before upper bounding}, we obtain:
\begin{equation*}
\begin{split}
    \E[N_k(n)] \leq  
    &\Big\lceil \frac{4(m+1)\log(n)}{\delta^2} \Big\rceil + \\
    &\sum_{t=1}^n\sum_{l \leq T_{\Tilde{u}_{t,1}},...,T_{\Tilde{u}_{t,m}}, T_{u_{t,1}},...,T_{u_{t,m}} \leq t-1} 4t^{-2(m+1)} \\
    &\leq \frac{4(m+1)\log(n)}{\delta^2} + 1 + \sum_{t=1}^n 4t^{-2(m+1)} \cdot t^{2m} \\ 
    &\leq \frac{4(m+1)\log(n)}{\delta^2} + 1 + 4\overbrace{\sum_{t=1}^{\infty} t^{-2}}^{=\pi^2/3}.
\end{split}
\end{equation*}

To conclude the theorem's statement, we set this result back into \eqref{eq: result of part 1} while recalling the added regret from the Genie empowerment, obtaining
\begin{equation*}
\begin{split}
    R^{\cP}(n) &\leq (\Delta_{\max} + \delta)  \sum_{k=1}^K \E[N_k(n)]\\
    & \leq K (\Delta_{\max} + \delta) \bigg(\frac{4(m+1)\log(n)}{\delta^2} + 1 + \frac{4\pi^2}{3}\bigg),
\end{split}    
\end{equation*}
concluding the proof of the theorem.

\subsection{Proof of Theorem \ref{THMSA}}\label{app:sa convergence} 
To prove the theorem, we introduce essential terminology and definitions. We define reachability as follows: Given two nodes $\mySet{V}_1$ and $\mySet{V}_2$ and energy level $E$, node $\mySet{V}_1$ is considered reachable from $\mySet{V}_2$ if there exists a path connecting them that traverses only nodes with energy greater than or equal to $E$. Building upon this definition, a graph exhibits Weak Reversibility if, for any energy level $E$ and nodes $\mySet{U}_1$ and $\mySet{U}_2$, $\mySet{U}_1$ is reachable from $\mySet{U}_2$ at height $E$ if and only if $\mySet{U}_2$ is reachable from $\mySet{U}_1$ at height $E$.

Following \cite{hajek1988cooling}, to prove that Theorem \ref{THMSA} holds, one has to show that the following requirements hold: 
     \begin{enumerate}[label={R\arabic*}]
         \item \label{itm:Req1} The graph satisfies weak reversibility \cite{hajek1988cooling}.
         \item \label{itm:Req2} The temperature sequence is from the form of $\tau_j = \frac{C}{\log(j+1)}$ where $C$ is greater than the maximal energy difference between any two nodes.
         \item \label{itm:Req3}  The Markov chain introduced in Algorithm \ref{alg:tailored-SA} is irreducible.
     \end{enumerate}
     
We prove the three mentioned conditions are satisfied to conclude the theorem. Requirements \ref{itm:Req1} and \ref{itm:Req2} follow from the formulation of SA-PAUSE. Specifically,  weak reversibility (\ref{itm:Req1}) stems directly from the definition and the undirected graph property, while the temperature sequence condition \ref{itm:Req2} is satisfied as we set $C$ to be as mentioned in~\eqref{eq: setting C}.

    To prove that \ref{itm:Req3} holds, by definition, we need to show there is a path with positive probability between any two nodes $\mySet{V},\mySet{U} \in \bV$. Since the graph is undirected, it is sufficient to show a path from $\mySet{V}$ to $\mySet{U}$. In Algorithm~\ref{alg: second appendix proof}, we present an implicit algorithm yielding a series of nodes $\mySet{V}_0, \mySet{V}_1, \ldots, \mySet{U}$. within this sequence, consecutive nodes are neighbors, i.e., the algorithm yields a path with positive probability from $\mySet{V}_0$ to $\mySet{U}$. 

    \begin{algorithm}
        \caption{Constructing Path  from $\mySet{V}_0$ to $\mySet{U}$ }
    \label{alg: second appendix proof}
        \SetKwInOut{Input}{Input} 
        \Input{Set of users $\bK$; an arbitrary node $\mySet{V}_0$, and $\mySet{U}$}
        
        \SetKwInOut{Initialization}{Init}
        \Initialization{$j = 0$}
        {

            \While{$\mySet{U} \neq \mySet{V}_j$}{ 
            \If{$\min_{k \in \mySet{V}_j} \{{\rm ucb}(k)\} \leq \max_{k \in \mySet{U} \setminus \mySet{V}_j} \{{\rm ucb}(k)\} $}{$\mySet{V}_{j+1} \triangleq \Big(\mySet{V}_j \setminus \argmin_{k \in \mySet{V}_j} \{{\rm ucb}(k)\}\Big) \bigcup \argmax_{k \in \mySet{U} \setminus \mySet{V}_j} \{{\rm ucb}(k)\} $}

            \Else{sample a random user $p$ from $\mySet{V}_j \setminus \mySet{U}$;
            \newline $\mySet{V}_{j+1} \triangleq (\mySet{V}_j \setminus \{p\}) \bigcup \argmax_{k \in \mySet{U} \setminus \mySet{V}_j} \{{\rm ucb}(k)\} $;
            \newline $j = j+1$

            }

            }
      }
    \end{algorithm}

    This algorithm possesses a crucial characteristic; the conditional statement evaluates to true until it transitions to false, and from that moment on, it remains False to the end. Thus, the algorithm can be partitioned into two phases: the iterations before the statement becomes false, and the rest. We denote the iteration the condition becomes false as $j^0$.

    First, observe that when $j<j^0$, $\mySet{V}_{j+1}$ is an active neighbor of $\mySet{V}_{j}$, whereas during all subsequent iterations, the former is a passive neighbor of the latter. This proves the transitions occur with positive probability in the first place. 

    Next, we prove the algorithm's correctness and termination. Let $b$ the minimum ${\rm ucb}$ value in $\mySet{V}_{j^0}$. For every $k \in \mySet{U}$, if ${\rm ucb}(k) > b$, then it is added to $\mySet{V}_j$ in an iteration $j<j^0$. this is guaranteed because if such incorporation had not occurred by the $j^0$th iteration, the conditional statement would remain satisfied, contradicting the definition of $b$. The rest of the users, i.e., every $k \in \mySet{U}$ such that ${\rm ucb}(k) \leq b$, will be added during the second phase. 

    Notice the algorithm avoids cyclical additions and subtractions, as during the second phase, users from $\mySet{U}$ who are already present in $\mySet{V}_j$ for all $j \geq j^0$ are preserved when constructing $\mySet{V}_{j+1}$. Instead, a user not belonging to $\mySet{U}$ is eliminated. Throughout this exposition, we have established that the algorithm terminates, and every user $k \in \mySet{U}$ is eventually incorporated into the evolving set without subsequent elimination. This completes our verification of the algorithm's correctness and the proof as a whole.

\end{appendix}

\fi

\bibliographystyle{IEEEtran}
\bibliography{IEEEabrv,IEEEabrv2,mybib}

\end{document}